\newtheorem{definition}{Definition}[subsection]
\newtheorem{example}{Example}[subsection]
\newtheorem{proposition}{Proposition}[subsection]
\newtheorem{proof}{Proof}[subsection]
\newtheorem{remark}{Remark}
\newcommand{\circledsmall}[1]{\hbox{\tikz\draw (0pt, 0pt)
    circle (.5em) node {\makebox[0.15em][c]{\scriptsize#1}};}}
\newcommand{\circledtiny}[1]{\hbox{\tikz\draw (0pt, 0pt)
    circle (.4em) node {\makebox[0.01em][c]{\tiny#1}};}}
    \newcommand{\circledlarge}[1]{\hbox{\tikz\draw (0pt, 0pt)
    circle (.6em) node {\makebox[0.15em][c]{\small#1}};}}
\begin{document}

\begin{frontmatter}

\title{Isopignistic Canonical Decomposition via Belief Evolution Network}

\author[inst1]{Qianli Zhou}
\author[inst1]{Tianxiang Zhan}
\author[inst1,inst5]{Yong Deng}

\affiliation[inst1]{organization={Institute of Fundamental and Frontier Science},
organization={University of Electronic Science and Technology of China},
            city={Chengdu},
            postcode={250014},
            country={China}}

\affiliation[inst5]{organization={School of Medicine},
organization={Vanderbilt University},
            city={Nashville},
            postcode={37240},
            country={USA}}

\begin{abstract}
We propose the isopignistic canonical decomposition for belief functions, offering a novel perspective that unifies probability theory, possibility theory, and Dempster-Shafer theory. Firstly, an isopignistic transformation method is proposed, which can transfer a certain belief function to arbitrary positions within its corresponding isopignistic domain. We then develop it into a new canonical decomposition for belief functions, decomposing the belief function into a propensity component and a commitment component.The boundary cases of the commitment component correspond to unique possibility and probability distributions, respectively. More generally, we discuss the advantages of this canonical decomposition in terms of both uncertainty measurement and information fusion, and utilize it to refine the hyper-cautious transferable belief model.
\end{abstract}

\begin{keyword}
Dempster-Shafer theory\sep canonical decomposition\sep isopignistic transformation\sep belief evolution network\sep hyper-cautious transferable belief model \sep uncertainty measure \sep information fusion
\end{keyword}

\end{frontmatter}

\section{Introduction}
Dempster-Shafer (DS) theory of evidence, also known as belief function theory, is an effective artificial intelligence tool for modeling and handling uncertainty in partial knowledge environments. The DS theory is derived from probability multi-valued mapping \cite{dempster2008upper} and is formalized as a random coding semantics \cite{shafer1976mathematical}. By assigning the mass to the frame's power set, the basic probability assignment (BPA) generalizes the probability distribution. Non-refinable beliefs are assigned on the multi-element propositions. Upon receiving enough knowledge, these beliefs can be transferred into singletons, and the BPA equals a probability distribution. Hence, the BPA can be viewed as a probability distribution that introduces ignorance through the multi-valued mapping. Possibility theory is an uncertainty information theory developed from fuzzy set theory, which uses the membership function to characterize the degree of non-negativity of elements within the frame \cite{solaiman2019possibility}. In DS theory, the plausibility function, which represents identical information content with BPA, is also used to indicate the degree of non-negativity of elements within a proposition. For a basic probability assignment (BPA) with nested focal sets, called a consonant mass function, its plausibility function is identical to the possibility function. Hence, a widely accepted bridge between DS theory and possibility theory is the consonant mass function \cite{dubois2001new,dubois2008definition}. When possibility and probability distributions are used to represent restrictions on the same uncertain variable, they can be connected via two BPAs. However, the relationship of general BPAs to them and the corresponding transformation method is an important but neglected topic, which is the first motivation of this paper.

Handling uncertainty in a reasonable manner is also a key step for reasoning and decision-making. When there are available BPAs from multiple independent and reliable sources, Dempster's rule can effectively fuse them, forming the basis of evidential reasoning and the generalized Bayes theorem. Due to the incomplete mutual exclusivity among focal sets, BPA can not visualise and process the information sufficiently. Therefore, some identical information content representations are proposed to realize the efficient and diverse combination rules. Among them, the best known are belief, plausibility and commonality functions based on the definition of set operations, which can implement conjunctive and disjunctive rules efficiently. In addition, the canonical decomposition is an another effective path to implement identical information content representation. Smets \cite{smets1995canonical} proposes a canonical decomposition for non-dogmatic mass functions, whose results are denoted as the diffidence function. For an $n$-element frame, a non-dogmatic mass function can be decomposed as $2^n-1$ simple mass functions' fusion and retraction \cite{dubois2020prejudice}. Based on the diffidence function and its dual form, the idempotent combination rules are proposed to fuse non-distinct BPAs \cite{denoeux2008conjunctive}. Pichon \cite{pichon2018canonical} proposes the $t$-canonical decomposition based on the Teugels’ representation of the multivariate Bernoulli distribution. Compared with Smets' result, Pichon's canonical decomposition covers all belief functions and has distinct semantics in multivariate Bernoulli distribution. The aforementioned identical information content representations face the same problem: although the transformations are invertible, their inverse transformations do not guarantee the generation of a valid BPA when the values of the functions are altered. It is the second motivation of this paper.

Isopignistic transformation is also an identical information content transformation but has not been previously discussed. It is derived from the isopignistic domain, which consists of a set of belief functions whose pignistic probability transformations (PPT) yield the same results \cite{dubois2008definition}. The isopignistic transformation involves transferring beliefs within the isopignistic domain. Since the consonant mass function represents the lower bound on the commitment within the isopignistic domain—i.e., the case with the least commitment—its corresponding possibilistic information is developed by Smets as the hyper-cautious transferable belief model (HCTBM). Since the commonality function of the general mass function is not guaranteed to generate a belief function after fusion through triangular norms, HCTBM is only used as an explanation of possibility theory and has not been applied to general. It is the third motivation of this paper.

This paper proposes a novel canonical decomposition developed from isopignistic transformation. Through the isopignistic canonical decomposition, a BPA can be decomposed into two components. The first component, propensity, is a possibility distribution, while the second component, commitment, represents the relationship between the original BPA and the propensity component. Our study not only examines the differences between the proposed method and previous canonical decompositions but also explores the uncertainty composition of the belief function from an isopignistic perspective. Generally, the isopignistic transformation will guide the development of the HCTBM.

The paper is organized as follows: First, some necessary prior concepts in DS theory are introduced. Secondly, isopignistic transformation is proposed based on the belief evolution network and developed into isopignistic canonical decomposition. Finally, we utilize the isopignistic function and ratio to model and handle uncertainty in the HCTBM to demonstrate its advantages.

\section{Basic concepts in Dempster-Shafer theory}

Suppose an uncertain variable $X$, its value is taken in an exhaustible closed set, called frame of discernment, $\Omega=\{\omega_1,\cdots,\omega_n\}$. A restriction $m$, called basic probability assignment (BPA) \cite{shafer1976mathematical} is a mass function to represent the available knowledge about $X$. The mass $m(F_i)$ on the proposition $F_i$ should satisfy $m(F_i)\in [0,1]$ and $\sum_{F_i\subseteq\Omega}m(F_i)=1$. $F_i$ is called a focal set when $m(F_i)>0$, where $i\in \{0,\cdots,2^n-1\}$ denotes the decimal representation of Boolean algebraic binary codes \cite{zhou2023generating}. Thus, the BPA under the $n$-element frame can be represented by a $2^n$-dimensional vector and its order is determined by the binary corresponding to the subset. For the convenient representation, there are some special mass functions. $m$ is a normalized mass function when $m(\emptyset)=0$, which equals a credal set mathematically. $m$ is a Bayesian mass function when the focal sets are singletons, which equals a probability distribution mathematically. $m$ is a simple mass function when there are only two focal sets and one of them is $\Omega$, which can be written as ${F_i}^\sigma\equiv m(F_i)=1-\sigma,~m(\Omega)=\sigma$. $m$ is a non-dogmatic mass function when $m(\Omega)>0$. $m$ is a consonant mass function when the focal sets are nested.

In terms of the set operation-based identical information content representations, belief $Bel$, plausibility $Pl$, implicability $b$ and commonality $q$ functions, are defined as follows
\begin{equation}
	\begin{aligned}
		&Bel(F_i)=\sum_{\emptyset\neq F_j\subseteq F_i}m(F_j),Pl(F_i)=\sum_{F_j\cap F_i\neq \emptyset}m(F_j),\\
		&b(F_i)=\sum_{F_j\subseteq F_i}m(F_j),q(F_i)=\sum_{F_i\subseteq F_j}m(F_j).\\
	\end{aligned}
\end{equation}

The conjunctive combination rule (CCR) and disjunctive combination rule (DCR) can be implemented or retracted efficiently via $q$ and $b$ functions
\begin{equation}\label{ccrdcr}
	\begin{aligned}
		&q_{1\circledtiny{$\cap$}2}(F_i)=q_1(F_i)\times q_2(F_i), q_2(F_i)= \frac{q_{1\circledtiny{$\cap$}2}(F_i)}{q_1(F_i)},\\
		&b_{1\circledtiny{$\cup$}2}(F_i)=b_1(F_i)\times b_2(F_i), b_2(F_i)= \frac{b_{1\circledtiny{$\cup$}2}(F_i)}{b_1(F_i)}.
	\end{aligned}
\end{equation}

In terms of the canonical decomposition-based identical information content representations, the diffidence function \cite{smets1995canonical,dubois2020prejudice} \footnote{Smets' canonical decomposition} $\sigma$ and its dual form $v$ are defined as
\begin{equation}\label{smetscde}
	\begin{aligned}
		\sigma(F_i)=\prod_{F_i\subseteq F_j}q(F_j)^{(-1)^{|F_j|-|F_i|-1}},\\
		v(F_i)=\prod_{F_j\subseteq F_i}b(F_j)^{(-1)^{|F_i|-|F_j|-1}}.
	\end{aligned}
\end{equation}
When the sources are non-distinct, the cautious and bold combination rules are defined as
\begin{equation}\label{caucrbcr}
	\begin{aligned}
		& m_1 \circledsmall{$\wedge$} m_2=\circledlarge{$\cap$}F_i^{\min(\sigma_1(F_i),\sigma_2(F_i))}\\
		& m_1  \circledsmall{$\vee$} m_2=\circledlarge{$\cup$}{F_i}_{\min(v_1(F_i),v_2(F_i))},\\
	\end{aligned}
\end{equation}
where ${F_i}_{v}\equiv m(F_i)=1-v,~m(\emptyset)=v$. 

Another canonical decomposition proposed by Pichon \cite{pichon2018canonical}, called $t$-canonical decomposition is developed from multivariate Bernoulli distribution, the decomposed result, $t$ function is defined as
\begin{equation}\label{pichoncde}
	\begin{aligned}
		t(F_i)=\begin{cases}
			Pl(F_i) & |F_i|=1, \\
			\bigotimes_{i:n\rightarrow 1}\begin{bmatrix}
				1 & 1\\
				Pl(\{\omega_i\}) & 1-Pl(\{\omega_i\})
			\end{bmatrix}\boldsymbol{m} & \text{otherwise}.
		\end{cases}
	\end{aligned}
\end{equation}

This section introduces the basic information representation, processing methods and common identical information content representations in DS theory, which are necessary preliminaries for this paper.

\section{Isopignistic transformation via belief evolution network}

\subsection{Problem description}

When there is no knowledge available to influence an agent's belief, the beliefs should be transferred into singletons for decision making. On the basis of the linearity principle, Smets \cite{smets2005decision} derives the pignistic probability transformation (PPT), which is denoted as
\begin{equation}\label{ppte}
	\begin{aligned}
		BetP_m(\omega)=\sum_{\omega\in F_i}\frac{m(F_i)}{|F_i|}.
	\end{aligned}
\end{equation}
It is evident that the outcome of an unnormalized BPA is still an unnormalized probability distribution.
Due to the irreversible nature of data dimensionality reduction, there exists an infinite number of belief functions with the same $BetP$.
\begin{definition}[Isopignistic domain \& transformation]
	For a probability distribution $\boldsymbol{p}=\{p_1,\cdots ,p_n\}$, its isopignistic domain contains all BPAs whose $BetP$s equal $\boldsymbol{p}$, which is defined as $\mathfrak{Iso}_{\boldsymbol{p}}=\{m|BetP_{m}=\boldsymbol{p}\}.$          
	If a reversible transformation of belief functions, denoted as $m_2=T(m_1)$, satisfies $BetP_{{m}_1}=BetP_{{m}_2}$, it is an isopignistic transformation.
\end{definition}
In this paper, we aim to propose an isopignistic transformation that can transfer a BPA to any position within its corresponding isopignistic domain.

\subsection{Belief evolution network}
Belief evolution network (BEN) is first proposed in \cite{zhou2022belief} and be utilized to develop the probability transformation method, which can be seen as a constrained belief transferring method.
\begin{definition}[Belief evolution network]\label{bend}
	The frame $\Omega$ can be represented as a directed acyclic graph $\mathcal{G}_{\Omega}=\{\mathcal{V},\mathcal{E}\}$, where $\mathcal{V}=\{F_i|\emptyset\neq F_i\subseteq \Omega\}$, $\mathcal{E}=\{(F_i,F_j)||F_i|=|F_j|+1,F_j\subseteq F_i\}$. For nodes where $|F_i| \geq 2$, the coefficient $\tau(F_i) \in [0,1]$ represents the proportion of beliefs transferred from $F_i$. For edges $(F_i, F_j)$, the coefficient $\xi\left(\frac{F_i}{F_j}\right) \in [0,1]$ satisfies $\sum_{F_j \subset F_i} \xi\left(\frac{F_i}{F_j}\right) = 1$, indicating the proportion of beliefs assigned from $F_i$ to $F_j$. The $\mathfrak{B} = \{\mathcal{G}_{\Omega}, {\zeta}, {\xi}\}$ denotes a belief evolution network (BEN). Its schematic diagram is shown in Figure \ref{benf1}. The revision is denoted as $m_\text{B} = T_{\mathfrak{B}}(m)$, and the specific algorithm is shown in Section 2.1 of the supplementary material.
\end{definition}
\begin{figure}[htbp!]
	\centering
	\includegraphics[width=0.6\textwidth]{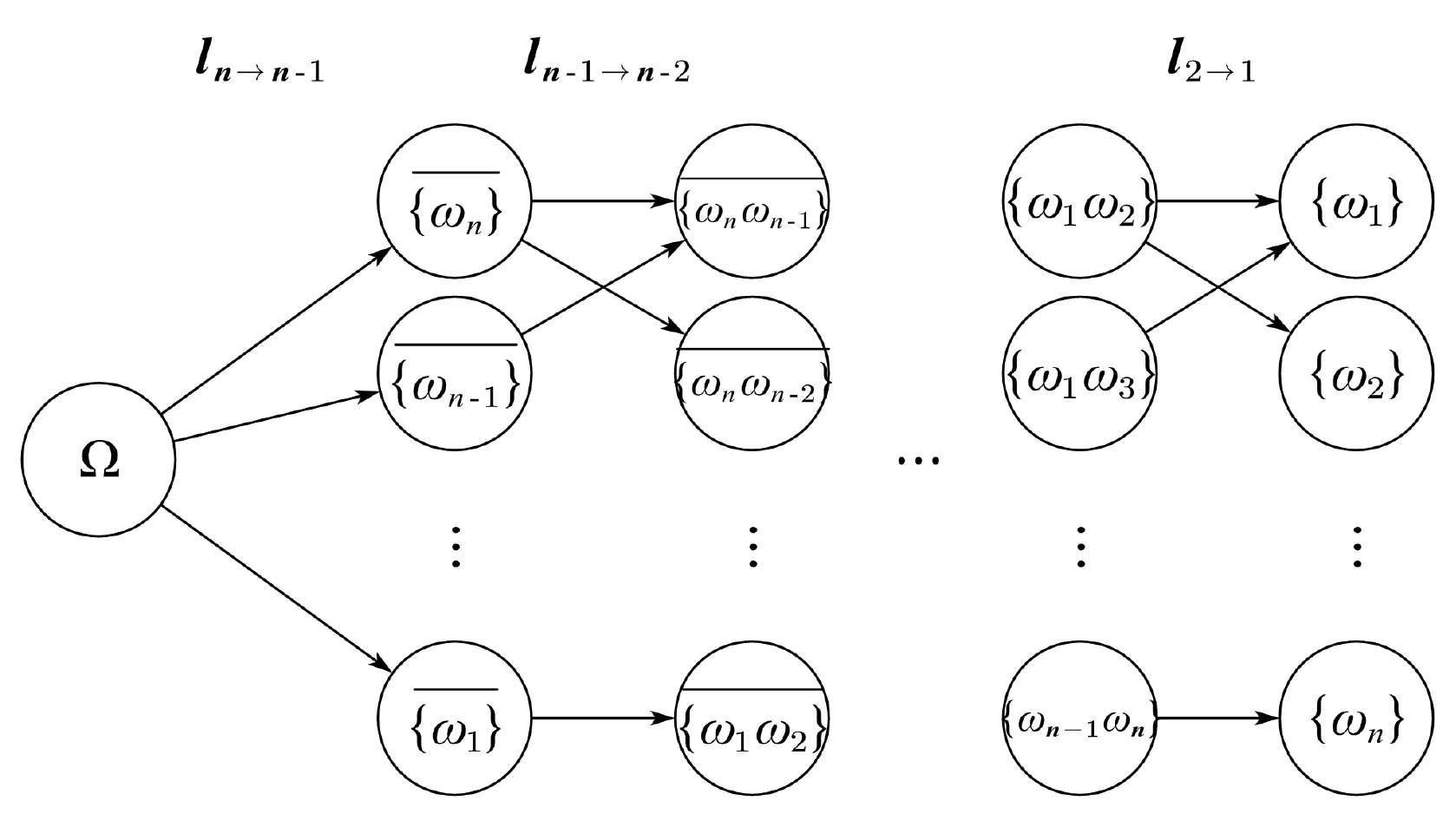}
	\caption{Belief evolution network}
	\label{benf1}
\end{figure}

\begin{proposition}
	If a probability transformation method satisfies upper and lower bounds consistency \footnote{Section V. A in \cite{han2016evaluation}}, i.e., $p(\omega_i) \in [Bel({\omega_i}), Pl({\omega_i})]$, it can be implemented via the BEN.
\end{proposition}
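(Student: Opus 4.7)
The plan is to constructively exhibit, for any BPA $m$ and any target distribution $p = T(m)$ satisfying the singleton bounds, a BEN $\mathfrak{B} = \{\mathcal{G}_\Omega, \tau, \xi\}$ whose output $T_\mathfrak{B}(m)$ equals $p$. As a first move I would fix $\tau(F_i) = 1$ for every $F_i$ with $|F_i| \geq 2$, which drains all non-singleton mass into the singleton layer and guarantees that $T_\mathfrak{B}(m)$ is a probability distribution. What remains is to choose $\xi$ on the edges of $\mathcal{G}_\Omega$ so that the aggregated flow into each $\{\omega_j\}$ matches $p(\omega_j)$.

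With $\tau \equiv 1$, BEN becomes a hierarchical transportation problem: each focal set $F$ is a source of supply $m(F)$, each singleton $\omega_j \in F$ is a sink with demand $p(\omega_j)$, and the transport from $F$ to $\omega_j$ is the sum, over chains $F \supsetneq F^{(|F|-1)} \supsetneq \cdots \supsetneq \{\omega_j\}$, of the products of the $\xi$-coefficients along the chain. The second step is a feasibility argument: by max-flow/min-cut (or a Hall-type theorem), a direct flow $f(F, \omega_j) \geq 0$ with the correct row-sums $m(F)$ and column-sums $p(\omega_j)$ exists precisely when $Bel(A) \leq \sum_{\omega_j \in A} p(\omega_j) \leq Pl(A)$ for every $A \subseteq \Omega$, a condition I would read off from the bounds-consistency hypothesis together with the structural properties of $Bel$ and $Pl$. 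The third step is to factor a feasible direct flow through the DAG by induction on the size $k$ of the maximum focal set: at each layer I distribute the mass of the current top-level sets to their $(k-1)$-element subsets using $\xi(F/\cdot)$, collapse the result into a BEN problem on a BPA of maximum focal-set size $\leq k-1$, and iterate until only singletons remain.

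The main obstacle is the consistency of $\xi$ at an intermediate node $F'$ that receives contributions from several sources, since the outgoing fractions $\xi(F'/\cdot)$ must be the same for every inflow and hence cannot be tuned source-by-source. My intended remedy is to aggregate before splitting: at each $F'$, I would set $\xi(F'/F'')$ to be the total downstream demand channelled through $F''$ divided by the total throughput at $F'$, so that the normalisation constraint $\sum_{F'' \subset F'} \xi(F'/F'') = 1$ collapses to mass conservation at $F'$. The delicate point is to verify that these aggregated-then-split fractions, propagated recursively down every layer, reconstruct the prescribed singleton totals rather than merely the aggregate; I expect to handle this with an inductive invariant coupling the layer-by-layer aggregation to the feasibility certificate produced in the second step.
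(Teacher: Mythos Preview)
Your overall plan---set $\tau\equiv 1$, reduce to a bipartite transportation problem, then route the resulting flow through the layered DAG---is the natural one, but the second step contains a gap that cannot be closed as written.

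You correctly identify that the transportation problem (sources the focal sets $F$ with supply $m(F)$, sinks the singletons $\omega$ with demand $p(\omega)$, arcs $F\to\omega$ only when $\omega\in F$) is feasible exactly when the Hall condition
\[
\sum_{\omega\in A} p(\omega)\ \le\ Pl(A)\qquad\text{for every }A\subseteq\Omega
\]
holds, i.e.\ when $p$ lies in the credal set of $m$. You then claim this full family of inequalities ``would be read off from the bounds-consistency hypothesis together with the structural properties of $Bel$ and $Pl$.'' It cannot: the singleton bounds $p(\omega_i)\in[Bel(\{\omega_i\}),Pl(\{\omega_i\})]$ are strictly weaker than credal-set membership, and no amount of $2$-monotonicity of $Bel$ will promote them.

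A concrete obstruction: take $\Omega=\{a,b,c,d\}$ with $m(\{a,b\})=m(\{c,d\})=\tfrac12$. Every singleton has $Bel=0$ and $Pl=\tfrac12$, so $p=(0.3,\,0,\,0.3,\,0.4)$ satisfies the stated hypothesis. Yet any BEN can only push the mass of $\{a,b\}$ into $\{a\}\cup\{b\}$ and that of $\{c,d\}$ into $\{c\}\cup\{d\}$, forcing $p(a)+p(b)=p(c)+p(d)=\tfrac12$; the Hall condition fails at $A=\{c,d\}$ and no $\xi$ realises this $p$.

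So either the proposition is false as literally stated with only singleton bounds, or ``upper and lower bounds consistency'' in the cited reference actually means the full constraint $Bel(A)\le P(A)\le Pl(A)$ for all $A\subseteq\Omega$ and the paper's ``i.e.'' is an imprecise gloss. Under the stronger hypothesis your max-flow/min-cut step is valid and the layer-by-layer routing can be carried out (your aggregation idea for $\xi$ is fine, since only the singleton totals matter, not the per-source allocations). Under the weaker hypothesis no argument can succeed. Either way, the sentence in which you upgrade singleton inequalities to set inequalities is where the proof breaks; you need to state explicitly which hypothesis you are using and, if it is the singleton one, acknowledge that the claim then fails.
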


\begin{proof}
	Please refer to Section 1.1 in supplementary material.
\end{proof}

\begin{proposition}\label{ppt_ben}
	If the BEN adheres to the condition $\tau(F_i) = 1$ and $\xi(\frac{F_j}{F_i})=\frac{1}{|F_j|}$, then the outcome of applying the BEN to any BPA will coincide with its $BetP$, i.e., $BetP_{m}=T_{\mathfrak{B}}(m)$.
\end{proposition}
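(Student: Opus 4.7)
The plan is to exploit the linearity of the BEN mass transfer and reduce the claim to a per-focal-set statement that can be handled by induction on cardinality. Because $\tau(F)=1$ whenever $|F|\geq 2$, every such node fully evacuates its mass, and because the edge coefficients are fixed constants, the mass accumulated at any node of the lattice after the BEN terminates is a linear functional of the input BPA. Hence it suffices to show that when the input is a unit mass concentrated at a single $F_i$, the output concentrates on the singletons of $F_i$ with value $1/|F_i|$ each; summing this pointwise over all focal sets then yields $T_{\mathfrak{B}}(m)(\omega)=\sum_{F_i\ni\omega} m(F_i)/|F_i| = BetP_m(\omega)$, which is precisely the claim.

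For the per-focal-set statement I would induct on $k=|F_i|$. The base case $k=1$ is trivial, since singletons do not transfer mass. In the inductive step, the unit mass at $F_i$ is split equally among the $k$ immediate subsets $H\subset F_i$ of size $k-1$, each receiving $1/k$ by the chosen edge coefficients. Applying the inductive hypothesis to each such $H$, the mass $1/k$ sitting at $H$ is uniformly redistributed among the $k-1$ singletons of $H$, each receiving $1/[k(k-1)]$. Now a given $\omega\in F_i$ lies in exactly $k-1$ immediate subsets of $F_i$ (those obtained by removing one of the other $k-1$ elements), so its total incoming mass is $(k-1)\cdot 1/[k(k-1)] = 1/k$, closing the induction.

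The main obstacle is largely bookkeeping: one must be careful that the iterative top-down transfer is well-defined (processing nodes in decreasing cardinality order) and that linear superposition remains valid when an intermediate node receives contributions from several focal sets simultaneously. An arguably cleaner alternative route is a probabilistic reading: the coefficient $1/|F|$ on each outgoing edge of $F$ is exactly the probability of uniformly removing one element of $F$, so running the BEN on a unit mass at $F_i$ realizes the random walk on the subset lattice induced by sampling a uniformly random permutation of the elements of $F_i$; by symmetry, each $\omega \in F_i$ survives as the final singleton with probability $1/|F_i|$, which again recovers the identity $T_{\mathfrak{B}}(m)=BetP_m$.
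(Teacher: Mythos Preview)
Your argument is correct: the linearity reduction to a single focal set followed by induction on $|F_i|$ is the standard route, and the count $(k-1)\cdot\frac{1}{k(k-1)}=\frac{1}{k}$ closes the induction cleanly. The paper defers its own proof to Section~1.2 of the supplementary material, which is not reproduced in the main text, so a direct textual comparison is not possible here; that said, for a layer-by-layer transfer statement of this kind the inductive argument you give is essentially canonical, and your probabilistic reading via a uniform random deletion order on $F_i$ is a valid and pleasant alternative that recovers the same $1/|F_i|$ by symmetry.
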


\begin{proof}
	Please refer to Section 1.2 in supplementary material.
\end{proof}

Compared with other belief revision and fusion methods \cite{zhou2017plato, liu2023two}, BEN transfers beliefs between sets of adjacent cardinalities iteratively, achieving a more refined and interpretable transfer process. Therefore, we utilize the BEN to implement a belief revision method that can cover the entire isopignistic domain.

\subsection{Isopignistic transformation}
Building upon Proposition \ref{ppt_ben}, one may conceptualize PPT as a distinctive instance of BEN. When adjusting the values of $\tau$ within the range $[0,1]$, while maintaining uniform distributions for $\xi$, performing the BEN on $m$ can achieve a transient state of the $BetP$. Hence, adjusting $\tau$ can implement an isopignistic transformation. However, since edges direct from propositions with larger cardinalities to those with smaller cardinalities and $\tau$ is a positive value, this transformation can not cover the entire isopignistic domain. To address this issue, we perform the BEN twice for each transformation, including forward and backward transfers, to ensure the revision can reach any positions of isopignistic domain.

\begin{definition}[Isopignistic transformation]
	Consider two BPAs $m_1$ and $m_2$ under the frame $\Omega$, coexisting within an isopignistic domain. The transformation $m_2=T_{\mathfrak{IB}}(m_1)$ is called the isopignistic transformation. $\mathfrak{IB}$ has two forms $\{\mathcal{G}_{\Omega},\tau\}$ and $\{\mathcal{G}_{\Omega},\zeta\}$, where $\zeta$ is isotransformation function, which represents the transferred beliefs and ${\tau}$ is isotransformation ratio, which represents its proportion. Under an isopignistic transformation, they have identical information content. The generations of ${\tau}$ and ${\zeta}$ can be written as ${\tau}=IT^{\tau}(m_1,m_2)$ and ${\zeta}=IT^{\zeta}(m_1,m_2)$, and the specific algorithm is shown in Section 2.2 in supplementary material.
\end{definition}


There are the following two differences between the isopignistic transformation and the BEN. Firstly, the isopignistic transformation constrains that $\xi(\frac{F_i}{F_j}) = \frac{1}{|F_i|}$, which preserves the pignistic consistency. Secondly, backward transfer is introduced to adjust the commitment degrees among the focal sets with the same cardinality, which extends the range of values for $\tau$ to $[-1,1]$.

\begin{example}\label{e1}
	Please refer to Section 3.1 in supplementary material.
\end{example}

The isotransformation function and ratio both produce the same outcomes, yet they play distinct roles in information processing. We utilize the following propositions to detail their differences.

\begin{proposition}[Role of ${\zeta}$]\label{role_zeta}
	Consider an isopignistic transformation $m_2 = T_{\mathfrak{IB}}(m_1)$. If its isotransformation function is $\zeta$, then $\zeta'$, satisfying $\zeta'(F_i) = -\zeta(F_i)$, will serve as the isotransformation function for the inverse transformation $m_1 = T_{\mathfrak{IB}}'(m_2)$.
\end{proposition}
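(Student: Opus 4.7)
The plan is to exploit the linearity of the isopignistic transformation in its parameter $\zeta$ together with the symmetry of the pignistic-preserving edge weights $\xi(F_i/F_j) = 1/|F_i|$. The guiding intuition is to interpret $\zeta(F_i)$ as a signed net mass flux at node $F_i$ of the DAG $\mathcal{G}_\Omega$: a positive value sends beliefs from $F_i$ down to its immediate subsets through the forward pass, while a negative value corresponds to a backward transfer that pulls mass upward from the subsets back to $F_i$. Under this interpretation, $\zeta$ already carries both directions of flow in a single signed quantity, which is exactly what extending $\tau$ from $[0,1]$ to $[-1,1]$ achieved at the ratio level.

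First, I would write the induced change in mass explicitly: for each focal set $F_i$ with $2 \leq |F_i| \leq n-1$, one has $m_2(F_i) - m_1(F_i) = -\zeta(F_i) + \sum_{F_j\supsetneq F_i,\,|F_j|=|F_i|+1} \zeta(F_j)/|F_j|$, with the corresponding boundary expressions at singletons and at $\Omega$. This exhibits $T_{\mathfrak{IB}}$ as an affine operator of the form $m_2 = m_1 + L\zeta$, where $L$ is linear in $\zeta$, so in particular $L(-\zeta) = -L\zeta$ by construction.

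Second, I would compose: applying $T_{\mathfrak{IB}}'$ with $\zeta' = -\zeta$ to $m_2$ yields $m_2 + L(-\zeta) = m_2 - L\zeta = m_1$, which is the desired inverse. The two well-posedness requirements are then immediate. Pignistic invariance holds because the constraint $\xi = 1/|F_i|$ that makes a pass $BetP$-preserving is symmetric under flow reversal, so $BetP_{T_{\mathfrak{IB}}'(m_2)} = BetP_{m_2} = BetP_{m_1}$; and validity of the output as a BPA (nonnegativity and normalization) is automatic because the output coincides with the original admissible $m_1$.

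The main obstacle I foresee is the bookkeeping that must unify the forward and backward passes so that a single signed $\zeta$ really does encode both as algebraic inverses. Concretely, I must verify that a backward transfer from $F_j$ to a superset $F_i$ with $|F_i|=|F_j|+1$, under the uniform splitting $1/|F_i|$, exactly undoes a forward transfer of the same magnitude through the same edge. Once that edge-level symmetry is established, the global sign flip of $\zeta$ follows edge-by-edge and the linearity identity $L(-\zeta) = -L\zeta$ closes the proof.
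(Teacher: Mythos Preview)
Your approach is correct and is essentially the one the paper relies on: although the proof itself is deferred to supplementary material, the very next result in the paper (Proposition~\ref{trans_iso}, transmittability) states that composing isopignistic transformations with functions $\zeta_1,\zeta_2$ yields the transformation with function $\zeta_1+\zeta_2$, which is precisely your affine identity $T_{\mathfrak{IB}}(m)=m+L\zeta$, and the present proposition is the special case $\zeta_2=-\zeta_1$. The only point worth making explicit in your write-up is that the triangular structure of $L$ (solve for $\zeta(\Omega)$ from $m_2(\Omega)-m_1(\Omega)$, then cardinality $n-1$, and so on downward) forces $\zeta$ to be \emph{uniquely} determined by the pair $(m_1,m_2)$, so that $-\zeta$ is not merely \emph{an} inverting function but \emph{the} output of $IT^{\zeta}(m_2,m_1)$.
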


\begin{proof}
	Please refer to Section 1.3 in supplementary material.
\end{proof}

\begin{proposition}[Role of ${\tau}$]\label{role_tau}
	If all values of $\tau$ lie within the range $[-1,1]$, for any BPA $m$, the outcome of the transformation $m'$ using $\mathfrak{IB}=\{\mathcal{G}_{\Omega},\tau\}$ remains a BPA. In other words, it adheres to the conditions $m'(F_i)\in[0,1]$ and $\sum_{F_i\subseteq \Omega}m'(F_i)=1$.
\end{proposition}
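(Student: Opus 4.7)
The plan is to establish the two defining properties of a BPA separately: mass preservation $\sum_{F_i\subseteq\Omega} m'(F_i) = 1$, and non-negativity $m'(F_i)\ge 0$. Boundedness $m'(F_i)\le 1$ then follows automatically from non-negativity together with normalization. The whole argument proceeds by tracking the mass flow through the DAG $\mathcal{G}_{\Omega}$ during the two passes (forward and backward) that together implement the isopignistic transformation.

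For normalization, I would exploit the local mass conservation of every elementary transfer step. At a node $F_i$ with current running mass $\mu(F_i)$, a forward update with coefficient $\tau(F_i)\ge 0$ removes the amount $\tau(F_i)\mu(F_i)$ and redistributes it across the children $F_j\subset F_i$ using weights $\xi(F_i/F_j)=1/|F_i|$ that sum to one by the BEN axioms. The total mass over all nodes is therefore invariant under a single update; iterating over all nodes with $|F_i|\ge 2$, ordered by decreasing cardinality during the forward pass and by increasing cardinality during the backward pass, gives $\sum_{F_i\subseteq\Omega} m'(F_i) = \sum_{F_i\subseteq\Omega} m(F_i) = 1$.

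For non-negativity, I would analyse one elementary step at a time and then chain the arguments. In the forward direction with $\tau(F_i)\in[0,1]$, the residual mass at $F_i$ becomes $(1-\tau(F_i))\mu(F_i)\ge 0$, while each child only accumulates additional mass, so non-negativity is preserved coordinate by coordinate. In the backward direction with $\tau(F_i)\in[-1,0]$, the roles are swapped: a fraction $|\tau(F_i)|$ of the appropriate child-mass is pulled upward into $F_i$, and the cap $|\tau(F_i)|\le 1$ ensures that no donor node is drained below zero. Combined with the normalization already proved, these invariants place every $m'(F_i)$ in $[0,1]$.

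The step I expect to be the main obstacle is the backward pass: in contrast to the forward pass, where non-negativity is immediate coordinate by coordinate, the backward update moves mass from smaller focal sets to larger ones and must be applied in a way that each donor node still holds enough mass to give up after the forward pass has already reshuffled things. The cleanest way to handle this is to invoke the explicit recursive construction of the backward transfer from Section 2.2 of the supplementary material, then prove by induction on cardinality that the running mass at each node stays bounded below by a quantity of the form $(1-|\tau|)\,\mu(F_i)\ge 0$, and finally read off non-negativity at the end. Apart from this bookkeeping, the rest of the argument is routine.
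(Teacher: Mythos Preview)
Your plan is sound and matches the paper's approach: mass conservation via local redistribution, then non-negativity by checking that each elementary update leaves all running masses in $[0,1]$.

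One point deserves sharpening, and it is precisely the one you flag as the obstacle. In the backward step at a node $F_i$ with $\tau(F_i)<0$, the isopignistic constraint $\xi(F_i/F_j)=1/|F_i|$ forces the withdrawal of the \emph{same amount} from every child $F_j=F_i\setminus\{\omega\}$, not the same \emph{fraction} of each child's own mass; pulling unequal amounts from the children would shift $BetP$. Concretely, the algorithm removes $|\tau(F_i)|\cdot\min_{F_j}\mu(F_j)$ from each child and adds the total to $F_i$. With $|\tau(F_i)|\le 1$ the minimum-mass child is left with $(1-|\tau(F_i)|)\min_{j}\mu(F_j)\ge 0$, and every other child retains strictly more, so non-negativity survives the step. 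Your inductive lower bound of the form $(1-|\tau|)\,\mu$ is exactly this observation applied to the tightest donor; once the backward update is written in these terms, that pass is no harder than the forward one and no separate ``enough mass after the forward reshuffle'' argument is needed, because the amount withdrawn is scaled to the \emph{current} minimum.
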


\begin{proof}
	Please refer to Section 1.4 in supplementary material.
\end{proof}

\begin{proposition}[Transmittability]\label{trans_iso}
	Consider two isopignistic transformations, $\mathfrak{IB}_1=\{\mathcal{G}_{\Omega},{\zeta}_1\}$ and $\mathfrak{IB}_2=\{\mathcal{G}_{\Omega},{\zeta}_2\}$, they satisfy $m_2=T_{\mathfrak{IB}_1}(m_1)$ and $m_3=T_{\mathfrak{IB}_2}(m_2)$. The transformation from $m_1$ to $m_3$ can be written as $m_3=T_{\mathfrak{IB}_{1+2}}(m_1)$, where $\mathfrak{IB}_{1+2}=\{\mathcal{G}_{\Omega},{\zeta}_{1}+{\zeta}_{2}\}$.
\end{proposition}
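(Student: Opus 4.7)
The plan is to exploit the fact that, once the edge coefficients are fixed to $\xi(F_i/F_j)=1/|F_i|$, the belief update induced by an isopignistic transformation acts linearly on the isotransformation function $\zeta$, so that composing two such transformations corresponds to summing their $\zeta$'s.

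First, I would write the update rule explicitly. Under $\mathfrak{IB}=\{\mathcal{G}_\Omega,\zeta\}$, a focal set $F_i$ loses a total mass of $\zeta(F_i)$ (positive for forward transfer, negative for backward transfer), and each of its covered subsets $F_j\subset F_i$ with $|F_j|=|F_i|-1$ receives $\zeta(F_i)/|F_i|$. Accumulating over neighbours in $\mathcal{G}_\Omega$, the new mass of any focal set $F_k$ can be written in the affine form $m'(F_k)=m(F_k)+L_\zeta(F_k)$, where
\begin{equation*}
L_\zeta(F_k)=\sum_{F_i\supset F_k,\,|F_i|=|F_k|+1}\frac{\zeta(F_i)}{|F_i|}-\zeta(F_k)\mathbf{1}_{|F_k|\geq 2}.
\end{equation*}
The crucial observation is that the operator $\zeta\mapsto L_\zeta$ is linear, and does not depend on the input BPA.

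Next I would apply this to the composition. From $m_2=T_{\mathfrak{IB}_1}(m_1)$ and $m_3=T_{\mathfrak{IB}_2}(m_2)$ one gets $m_3(F_k)=m_1(F_k)+L_{\zeta_1}(F_k)+L_{\zeta_2}(F_k)=m_1(F_k)+L_{\zeta_1+\zeta_2}(F_k)$ for every $F_k$, by linearity. This says that running $\mathfrak{IB}=\{\mathcal{G}_\Omega,\zeta_1+\zeta_2\}$ on $m_1$ yields exactly $m_3$. Since $m_1$ and $m_3$ lie in the same isopignistic domain (both $\mathfrak{IB}_1$ and $\mathfrak{IB}_2$ preserve $BetP$), this composite BEN is itself an isopignistic transformation, and by the uniqueness of $\zeta$ as the record of transferred beliefs one concludes $\mathfrak{IB}_{1+2}=\{\mathcal{G}_\Omega,\zeta_1+\zeta_2\}$.

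The only subtle step will be justifying the additive decomposition cleanly in the presence of backward transfers, since $\zeta$ can take negative values; here I would invoke Proposition~\ref{role_zeta}, which already treats negation of $\zeta$ as inverse transfer, together with Proposition~\ref{role_tau}, which guarantees that intermediate outputs remain valid BPAs so the second stage $T_{\mathfrak{IB}_2}$ is well defined on $m_2$. I expect the routine but slightly tedious bookkeeping of the coefficients in $L_\zeta$, including the case $|F_k|=1$ where no outgoing $\zeta(F_k)$ term is present, to be the main obstacle; the structural content of the claim is really just linearity of the BEN update in $\zeta$.
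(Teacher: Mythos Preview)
Your argument is sound and is essentially the only natural one: once the edge weights are frozen at $\xi(F_i/F_j)=1/|F_i|$, the isotransformation function $\zeta$ records absolute transferred masses, so the action of $T_{\{\mathcal{G}_\Omega,\zeta\}}$ on any BPA is the affine map $m\mapsto m+L\zeta$ with $L$ a fixed linear operator; additivity of the composite then drops out immediately. The paper relegates its proof to Section~1.5 of the supplementary material, which is not reproduced here, so a line-by-line comparison is not possible, but the linearity-in-$\zeta$ observation you isolate is precisely the structural content of the claim and there is no room for a genuinely different route.

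Two small clean-ups. First, your appeal to Proposition~\ref{role_tau} is misplaced: that proposition concerns the ratio $\tau\in[-1,1]$, not $\zeta$, and in any case you do not need it---$m_2$ is a valid BPA by hypothesis (it is given as $T_{\mathfrak{IB}_1}(m_1)$ between two BPAs), so $T_{\mathfrak{IB}_2}$ applies to it without further justification. Second, the closing ``uniqueness of $\zeta$'' remark is superfluous: the proposition only asserts that $\zeta_1+\zeta_2$ \emph{serves} as an isotransformation function for $m_1\to m_3$, i.e.\ that $T_{\{\mathcal{G}_\Omega,\zeta_1+\zeta_2\}}(m_1)=m_3$, which your computation already delivers; whether the algorithm $IT^\zeta(m_1,m_3)$ returns exactly $\zeta_1+\zeta_2$ is a separate (and unneeded) question.
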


\begin{proof}
	Please refer to Section 1.5 in supplementary material.
\end{proof}

\begin{proposition}[Ergodicity]\label{p6}
	Consider a $m$ under the frame $\Omega$, for arbitrary BPA $m'$ satisfying $m'\in \mathfrak{Iso}_{BetP_{m}}$, there must exist an isopignistic transformation with $\mathfrak{IB}=\{\mathcal{G}_{\Omega},\zeta\}$ satisfying $m'=T_{\mathfrak{IB}}(m)$.
\end{proposition}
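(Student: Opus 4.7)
The plan is to use Transmittability (Proposition \ref{trans_iso}) together with the invertibility of every isopignistic transformation (Proposition \ref{role_zeta}) to reduce Ergodicity to a reachability claim from a single canonical element of $\mathfrak{Iso}_{BetP_m}$. Concretely, let $m^\star$ be the Bayesian mass function defined by $m^\star(\{\omega_i\}) = BetP_m(\omega_i)$; it clearly belongs to $\mathfrak{Iso}_{BetP_m}$. Proposition \ref{ppt_ben}, instantiated with $\tau \equiv 1$ and uniform $\xi$, provides a BEN-based isopignistic transformation $m \mapsto m^\star$, since source and target share the same $BetP$. Thus, decomposing $m \to m'$ as the two-step path $m \to m^\star \to m'$, it suffices to exhibit, for an arbitrary target $m' \in \mathfrak{Iso}_{BetP_m}$, an isopignistic transformation $T_{\mathfrak{IB}^\star}$ with $T_{\mathfrak{IB}^\star}(m^\star) = m'$. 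Transmittability then realizes $m \to m'$ with $\zeta$ equal to the sum of the two pieces.

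The bulk of the work is constructing $\mathfrak{IB}^\star$. I would exhibit the signed node values $\zeta(F_i)$ for every $F_i$ with $|F_i| \geq 2$ — encoding a possibly backward transfer out of $F_i$ — and solve the resulting equations top-down in cardinality so that, after propagation through the fixed uniform edge splits of an isopignistic BEN, the mass on every focal set matches $m'(F)$. Because of the uniform split, $BetP$-preservation at every singleton $\omega$ reduces to the single identity
\[ BetP_m(\omega) \;=\; \sum_{\omega \in F} \frac{m'(F)}{|F|}, \]
which holds by hypothesis $m' \in \mathfrak{Iso}_{BetP_m}$. The consistency equations are therefore automatically compatible and uniquely determine $\zeta$ on every node, yielding $\mathfrak{IB}^\star$.

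The hard part will be verifying admissibility: Proposition \ref{role_tau} requires each $\tau$ to lie in $[-1,1]$, equivalently the intermediate masses must remain in $[0,1]$ throughout the construction. The key a priori bound is again pignistic: for every singleton $\omega$, the total outward backward flow through $\{\omega\}$ is $BetP_m(\omega) - m'(\{\omega\}) \in [0,\,BetP_m(\omega)]$, exactly the initial mass available at $\{\omega\}$ in $m^\star$. A cascading application of the pignistic identity at higher cardinalities produces analogous budget bounds on every $|\zeta(F_i)|$ and hence $|\tau(F_i)| \leq 1$. Composing the forward reduction $m \to m^\star$ with the backward construction $m^\star \to m'$ via Proposition \ref{trans_iso} then produces the desired isopignistic transformation sending $m$ to $m'$, establishing Ergodicity.
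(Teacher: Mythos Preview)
Your reduction through the Bayesian point $m^\star$ is natural and almost certainly matches the paper's argument: the isopignistic transformation is explicitly described as a forward pass (high to low cardinality) followed by a backward pass, so $m^\star=BetP_m$ is exactly the state reached after the forward pass, and Propositions~\ref{ppt_ben}, \ref{role_zeta} and \ref{trans_iso} are set up in precisely the order needed for your composition. The overall architecture is correct.

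The one place where your sketch is too thin is the admissibility of the backward leg $m^\star\to m'$. You handle the singleton level correctly, but ``a cascading application of the pignistic identity at higher cardinalities'' is not yet an argument. What you need is the explicit upward recursion
\[
\mathrm{need}(F)\;=\;m'(F)\;+\!\!\sum_{\substack{G\supset F\\ |G|=|F|+1}}\frac{\mathrm{need}(G)}{|G|},
\qquad \mathrm{need}(\Omega)=m'(\Omega),
\]
which records the total mass that must pass through $F$ during the backward construction. Two facts then close the proof: (i) every $\mathrm{need}(F)\ge 0$ trivially, so the intermediate mass at $F$ (which equals $\mathrm{need}(F)$ just after level $|F|$ is filled and $m'(F)$ just after level $|F|+1$ drains it) never goes negative; and (ii) unwinding the recursion down to singletons gives exactly $\mathrm{need}(\{\omega\})=\sum_{F\ni\omega}m'(F)/|F|=BetP_m(\omega)=m^\star(\{\omega\})$, so the budget at each singleton is met exactly. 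This is the ``cascading pignistic identity'' you allude to, but it has to be written out.

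A minor point: your appeal to Proposition~\ref{role_tau} is in the wrong direction. That proposition says that any $\tau\in[-1,1]$ yields a valid BPA; here you are constructing $\zeta$ directly and must verify non-negativity of intermediate masses, which is what the $\mathrm{need}(\cdot)$ computation does. There is no need to pass through $\tau$ at all for the existence claim in Proposition~\ref{p6}.
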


\begin{proof}
	Please refer to Section 1.6 in supplementary material.
\end{proof}

\subsection{Discussion}
In this section, we introduce a novel belief revision method designed to transform BPAs within the isopignistic domain. This method is implemented through a modified BEN and has two forms: the isotransformation function and the ratio. On the one hand, the isotransformation function specifies the beliefs involved in the revision process, allowing for both inverse and sequential transformations. On the other hand, the isotransformation ratio represents the proportion of beliefs being transferred, and can be used to perform transformations even when the target BPA is unknown.

\section{Isopignistic canonical decomposition}

\subsection{Motivation}
In the introduction section, we discuss the motivations of this paper from three perspectives: uncertainty representation, canonical decomposition and HCTBM, respectively. In conjunction with the isopignistic transformation proposed in the previous section, we will explore why these motivations can be addressed through isopignistic canonical decomposition.

\subsubsection{Uncertainty representation} Both possibility and probability distributions can be modeled using mass functions, where the former corresponds to consonant mass functions and the latter corresponds to Bayesian mass functions. Hence, if they can be situated within an isopignistic domain and connected through the BEN, the isotransformation function and ratio will provide a unified relationship between them and the general mass function.

\subsubsection{Canonical decomposition} There are two well-known methods from previous research. Smets' decomposition (Eq. (\ref{smetscde})) does not encompass all mass functions, while Pichon's decomposition (Eq. (\ref{pichoncde})) lacks a distinct interpretation within the belief function framework. Additionally, neither method allows for the reconstruction of BPAs from their elementary components. Proposition \ref{role_tau} demonstrates that the isotransformation ratio offers a method to explore unknown mass functions within the isopignistic domain. Hence, if a canonical decomposition is developed from the isotransformation ratio, the reconstruction can be achieved through the inverse canonical decomposition.

\subsubsection{Hyper-cautious TBM} Its information fusion is achieved by applying the minimum t-norm to the commonality functions. However, only when the original BPAs are consonant mass functions can the fused commonality function be inverted to a BPA. For a general BPA, if first transform it into a consonant mass function, then fuse it using the minimum t-norm, and finally invert it back to the general form, its information fusion under the HCTBM will be achieved. Since the isotransformation ratio provides a pathway for reconstructing BPAs, the issue can be addressed from the perspective of isopignistic transformation.

\subsection{Isopignistic canonical decomposition}

Since a consonant mass function uniquely corresponds to a possibility distribution, where the possibility measure represents the propensity of the element. In addition, both the isotransformation function and ratio are mappings: $\{F_i \mid |F_i| > 1, F_i \subseteq \Omega\} \rightarrow [-1,1]$, which can be used to describe the path from a possibility distribution to the target BPA. Hence, we decompose a BPA into two components: propensity and commitment.

\begin{definition}[Isopignistic canonical decomposition]
	Given a BPA $m$ under an $n$-element frame $\Omega$, it can be decomposed as a propensity component and a commitment component. Propensity is an $n$-dimensional possibility distribution $Poss$
	\begin{equation}
		\begin{aligned}
			Poss(\omega_j)=\sum_{\omega_i\in\Omega}\min(BetP_{m}(\omega_i),BetP_{m}(\omega_j)),
		\end{aligned}
	\end{equation}
	where $BetP_m$ can be implemented via Eq. (\ref{ppte}). Commitment is similar with the isopignistic transformation, which has two forms, isotransformation function and ratio.
	\begin{equation}
		\begin{aligned}
			\tau = IT^{\tau}(m_{\rm{c}},m), \zeta = IT^{\zeta}(m_{\rm{c}},m),
		\end{aligned}
	\end{equation}
	where $$m_{\rm{c}}=Poss(\omega(t))-Poss(\omega(t+1)),$$ and $\omega(t)$ is the element with the $t$th largest possibility measure. Specially, $Poss(\omega(n+1))=0$. The above elementary pieces can be represented as two forms of information granule, denoted as the isopignistic function $Iso^{\tau}$ and $Iso^{\zeta}$,
	\begin{equation}\label{isocde}
		\begin{aligned}
			Iso^{\zeta}(F_i)=\begin{cases}
				m(\emptyset) & F_i = \emptyset, \\
				Poss(\omega_j) & F_i = \{\omega_j\},\\
				\zeta(F_i) & |F_i|>1,\\
			\end{cases}\\
			Iso^{\tau}(F_i)=\begin{cases}
				m(\emptyset) & F_i = \emptyset, \\
				Poss(\omega_j) & F_i = \{\omega_j\},\\
				\tau(F_i) & |F_i|>1.
			\end{cases}
		\end{aligned}
	\end{equation}
	Hence, the isopignisitc canonical decomposition is denoted as $Iso^\zeta = IsoCD[\zeta](m)$ and $Iso^\tau = IsoCD[\tau](m)$.
\end{definition}

\begin{example}\label{e2}
	Please refer to Section 3.2 in supplementary material.
\end{example}

As an identical information content transformation, its inverse process is denoted as $m = IsoCD^{-1}[\tau](Iso^{\tau})$ or $ m = IsoCD^{-1}[\zeta](Iso^{\zeta})$, whose specific Algorithms are shown in the Sections 2.3 and 2.4 in supplementary material. The isopignistic canonical decomposition is proposed based on the isopignistic transformation, which decomposes the target BPA into a possibility distribution and an isotransformation function (or ratio) by establishing an invertible transformation between the consonant mass function and the target BPA within the isopignistic domain. Thus, all the properties of isopignistic transformation discussed previously also apply to the corresponding canonical decomposition.

\subsection{Properties}

To demonstrate the rationality of the isopignistic canonical decomposition, its properties are further examined within the belief function framework.

\begin{proposition}[Lower and upper bounds]\label{p7}
	Consider a BPA $m$ under an $n$-element frame $\Omega$. The commitment component resulting from the isopignistic canonical decomposition has lower and upper bounds, which correspond to probability and possibility distribution, respectively. In terms of the isopignistic ratio, its upper bound satisfies $\forall |F_i|>1$, $Iso^{\tau}(F_i)=1$, and its lower bound satisfies $\forall |F_i|>1$, $Iso^{\tau}(F_i)\leq 0$. In terms of the isopignistic function, its upper bound satisfies $\forall |F_i|>1$, $Iso^{\tau}=IT^{\tau}(m_{\rm{c}}, BetP_m)$, and its lower bound satisfies  $\forall |F_i|>1$, $Iso^{\zeta}(F_i)=0$.
\end{proposition}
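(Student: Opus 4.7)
The plan is to identify the two extreme members of the isopignistic domain $\mathfrak{Iso}_{BetP_{m}}$ explicitly and then read each of the four stated bounds off the definition of $IsoCD$ given in Eq.~(\ref{isocde}). The decomposition anchors every isopignistic transformation at the consonant mass function $m_{\rm{c}}$ built from $Poss$, which is by construction the possibility distribution lying inside $\mathfrak{Iso}_{BetP_{m}}$; the corresponding probability-distribution anchor is the Bayesian mass function carried by $BetP_{m}$ itself. It therefore suffices to characterise the two isopignistic transformations from $m_{\rm{c}}$ to each of these anchors, in both $\tau$-form and $\zeta$-form.

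For the probability-distribution extreme I would invoke Proposition~\ref{ppt_ben}. Since the isopignistic canonical decomposition already fixes $\xi\!\left(\tfrac{F_i}{F_j}\right)=1/|F_i|$, setting $\tau(F_i)=1$ for every $|F_i|>1$ reduces the modified BEN to the ordinary PPT-BEN and therefore maps $m_{\rm{c}}$ to $BetP_{m_{\rm{c}}}=BetP_{m}$, i.e.\ to the Bayesian mass function of the domain. By the transmittability statement of Proposition~\ref{trans_iso}, the same transformation expressed in $\zeta$-form equals $IT^{\zeta}(m_{\rm{c}},BetP_{m})$, which supplies the upper bound of $Iso^{\zeta}$ (the ``$Iso^{\tau}$'' and ``$IT^{\tau}$'' on the right-hand side of the stated identity are read as their $\zeta$-counterparts, since the enclosing clause concerns the isopignistic function). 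For the possibility-distribution extreme the target is $m=m_{\rm{c}}$, which is reached by the identity transformation: in $\zeta$-form no belief is moved at all, so $\zeta(F_i)=0$ on every $|F_i|>1$ immediately; in $\tau$-form, because $m_{\rm{c}}$ already minimises commitment inside $\mathfrak{Iso}_{BetP_{m}}$, any strictly positive forward $\tau(F_i)$ would redistribute mass to smaller focal sets and would have to be cancelled by the backward pass, pinning the admissible values to $\tau(F_i)\leq 0$. Propositions~\ref{role_tau} and~\ref{p6} then guarantee, respectively, that such $\tau$-choices still yield a valid BPA and that together they exhaust the whole lower corner.

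The main obstacle will be making the clause ``$m_{\rm{c}}$ minimises commitment inside $\mathfrak{Iso}_{BetP_{m}}$'' operational within the two-pass modified BEN described in Section~2.2 of the supplementary material. Concretely, one must show that the forward and backward coefficients combine so that the interval $[-1,1]$ for $\tau$ splits cleanly: the non-positive portion parameterises targets that coincide with $m_{\rm{c}}$, while the strictly positive portion parameterises targets that are strictly more committed, culminating at $BetP_{m}$ when $\tau\equiv 1$. Once this bookkeeping is pinned down, each of the four stated bounds drops out of Eq.~(\ref{isocde}) together with the isopignistic-transformation properties already established in Propositions~\ref{role_zeta}--\ref{p6}.
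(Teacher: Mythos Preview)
The paper defers its proof entirely to the supplementary material, so a line-by-line comparison is not possible; judging the proposal on its own merits, your overall architecture---anchor at $m_{\rm c}$, use Proposition~\ref{ppt_ben} to reach $BetP_{m}$ for the probability extreme, and use the identity transformation for the possibility extreme---is the natural one and almost certainly coincides with the intended argument.

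Two small corrections are worth making before you flesh this out. First, your appeal to Proposition~\ref{trans_iso} for the upper bound of the isopignistic \emph{function} is misplaced: transmittability concerns the additive composition $\zeta_1+\zeta_2$ of two successive isopignistic transformations, not the conversion between the $\tau$- and $\zeta$-representations of a single one. The identity $Iso^{\zeta}=IT^{\zeta}(m_{\rm c},BetP_{m})$ follows directly from the definition of $IT^{\zeta}$ once you know the target BPA is $BetP_{m}$; no extra proposition is needed. Second, your heuristic for the $\tau\leq 0$ lower bound (``a positive forward $\tau$ would have to be cancelled by the backward pass'') misreads the mechanism: in the modified BEN each $F_i$ carries a \emph{single} signed value $\tau(F_i)$, whose sign selects forward versus backward transfer, so there is no separate backward coefficient to cancel a positive forward one on the same node. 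The correct argument is layer-by-layer: on the nested focal chain of $m_{\rm c}$ the algorithm $IT^{\tau}(m_{\rm c},m_{\rm c})$ must return $\tau=0$ (any strictly positive value would push mass to a non-focal subset and could not be recovered at that cardinality), while on non-focal subsets $m_{\rm c}(F_i)=0$ and, per Remark~\ref{r1}, the ratio is unconstrained---the canonical choice produced by the algorithm lands in $(-1,0]$. You already flag this bookkeeping as the main obstacle; once you trace the two passes of Section~2.2 of the supplement explicitly, the inequality $\tau(F_i)\leq 0$ drops out without needing the informal ``minimises commitment'' clause.
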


\begin{proof}
	Please refer to Section 1.7 in supplementary material.
\end{proof}

\begin{proposition}[Reversible construction]\label{p8}
	Consider a power set $2^\Omega$ of a frame $\Omega$, and a mapping $pc: 2^\Omega \rightarrow [-1, 1]$. If $pc$ satisfies the following requirements: $\forall |F_i| = 1$, $pc(F_i) \in [0,1]$; $\forall |F_i| > 1$, $pc(F_i) \in [-1,1]$; and $pc(\emptyset) = 1 - \max_{\omega \in \Omega} pc(\{\omega\})$, then it can be reconstructed as a BPA via $m = IsoCD^{-1}[\tau](pc)$.
\end{proposition}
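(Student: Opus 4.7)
The plan is to view the mapping $pc$ as encoding two independent pieces of data that together drive the reconstruction algorithm $IsoCD^{-1}[\tau]$: its restriction to singletons (together with the value on $\emptyset$) serves as the propensity component, and its restriction to sets of cardinality at least two serves as the isotransformation ratio. My strategy is first to verify that the propensity data produces a valid consonant mass function $m_{\rm{c}}$, and then to invoke Proposition \ref{role_tau} on the transformation of $m_{\rm{c}}$ by that ratio.

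First I would treat $Poss(\omega_j) := pc(\{\omega_j\})$ as a possibility distribution, which is legitimate since each value lies in $[0,1]$ by hypothesis. Following the construction in the definition of the isopignistic canonical decomposition, order the singletons so that $pc(\{\omega(1)\}) \geq \cdots \geq pc(\{\omega(n)\})$, adopt the convention $Poss(\omega(n+1)) = 0$, and define
\[
m_{\rm{c}}\bigl(\{\omega(1),\ldots,\omega(t)\}\bigr) = Poss(\omega(t)) - Poss(\omega(t+1)), \qquad t = 1,\ldots,n,
\]
together with $m_{\rm{c}}(\emptyset) = pc(\emptyset)$. Every such mass is non-negative by the chosen ordering, and a telescoping computation gives
\[
pc(\emptyset) + \sum_{t=1}^{n} \bigl[Poss(\omega(t)) - Poss(\omega(t+1))\bigr] = \bigl(1 - \max_{\omega}pc(\{\omega\})\bigr) + Poss(\omega(1)) = 1,
\]
where the last equality uses exactly the boundary condition imposed on $pc(\emptyset)$. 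Hence $m_{\rm{c}}$ is a valid consonant BPA with nested focal sets.

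Next I would interpret the restriction $\tau := pc|_{\{F_i : |F_i| > 1\}}$ as an isotransformation ratio; by hypothesis $\tau(F_i) \in [-1,1]$ for every such $F_i$. By construction of the inverse decomposition, $m = IsoCD^{-1}[\tau](pc)$ is precisely the isopignistic transformation $m = T_{\mathfrak{IB}}(m_{\rm{c}})$ with $\mathfrak{IB} = \{\mathcal{G}_{\Omega}, \tau\}$. Since $m_{\rm{c}}$ is a BPA and every value of $\tau$ lies in $[-1,1]$, Proposition \ref{role_tau} applies directly and yields $m(F_i) \in [0,1]$ for every $F_i \subseteq \Omega$ together with $\sum_{F_i \subseteq \Omega} m(F_i) = 1$, which is the desired conclusion.

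The main obstacle I anticipate is not any hard inequality but a bookkeeping check: one must align the formal construction of $m_{\rm{c}}$ above with the implicit use of the propensity component inside the supplementary-material algorithm for $IsoCD^{-1}[\tau]$, in particular handling the convention $Poss(\omega(n+1)) = 0$ and any ties among singleton values. Once that identification is made, validity of the reconstructed BPA reduces cleanly to Proposition \ref{role_tau} applied to the consonant base $m_{\rm{c}}$, and no further estimates are needed.
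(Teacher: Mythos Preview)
Your proposal is correct and follows essentially the same approach as the paper: verify that the singleton values together with the condition on $pc(\emptyset)$ yield a valid consonant BPA $m_{\rm c}$, and then apply Proposition \ref{role_tau} to the isopignistic transformation of $m_{\rm c}$ driven by the ratio $\tau = pc|_{\{|F_i|>1\}}$. The telescoping check for $\sum m_{\rm c}=1$ and the direct appeal to Proposition \ref{role_tau} are exactly the two ingredients the paper uses.
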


\begin{proof}
	Please refer to Section 1.8 in supplementary material.
\end{proof}

\begin{remark}\label{r1}
	During the reconstruction process of BPA, in addition to the ratio itself, the possibility measure will also affect the transferred beliefs. Consider a subset $F_i$, if $m_\text{c}(F_i)=0$, no matter the value of the ratio, there is no beliefs will be transferred from $F_i$. Hence, there may be infinite number of $pc$s point to a same $m$, but only when $pc$ satisfies $pc=IsoCD^{-1}[\tau](IsoCD[\tau](pc))$, it will be denoted as isopignistic ratio.
\end{remark}

\begin{remark}\label{r3}
	Proposition \ref{p6} indicates that for a BPA $m$, the isotransformation function can implement an isopignistic transformation at any position within $\mathfrak{Iso}_{BetP_m}$. Therefore, any BPA can be decomposed into a unique isopignistic function and ratio. When using the ratio to reconstruct a BPA, and with the propensity component determined, any position within the corresponding isopignistic domain can be implemented through the ratio.
\end{remark}

\begin{example}\label{e3}
	Please refer to Section 3.3 in supplementary material.
\end{example}

Example \ref{e3} utilizes a typical instance to illustrate Remark \ref{r1}. When the propensity component is fixed, different ratios may correspond to the same BPA, but the ratios corresponding to different BPAs must be distinct. Therefore, this does not affect the reconstructed BPA characteristics of the isopignistic ratio.

\section{Advantages}

In this section, the advantages of the isopignistic canonical decomposition will be discussed in relation to the three previous motivations: uncertainty representation, canonical decomposition, and HCTBM.
\subsection{Uncertainty representation}
The transformation approaches between possibilistic and probabilistic information have been discussed from various perspective \cite{dubois2004probability,zadeh2014note}. Each pair of bijective transformations can be positioned as a Bayesian mass function and a consonant mass function under the same belief structure. However, there has been no previous research on how the transformation is implemented. In terms of the Smets' canonical decomposition, since the Bayesian mass function is dogmatic, it can not be decomposed via diffidence function. In terms of Pichon's canonical decomposition, when the contour function $Pl({\omega})$ is fixed and satisfies $\sum_{\omega \in \Omega} Pl({\omega}) \leq 1$, there exists a pair of $t$ functions to represent the possibility and probability distributions, respectively. However, when $\sum_{\omega \in \Omega} Pl({\omega}) \leq 1$, adjusting the $t$ functions to maintain the additivity of the plausibility function will result in a mass function that is not a valid BPA. In terms of isopignistic canonical decomposition, as shown in Proposition \ref{p7}, the upper and lower bound of the isopignistic ratio correspond the unique probability and possibility distribution, respectively. In addition to providing a pathway between probability and possibility distributions, any mass function can be interpreted as a transient state of their transformation using the proposed method. As shown in Figure \ref{aaai_fig}, different adjustments of the ratio lead to different paths.

\begin{figure}[htbp!]
	\centering
	\includegraphics[width=0.6\textwidth]{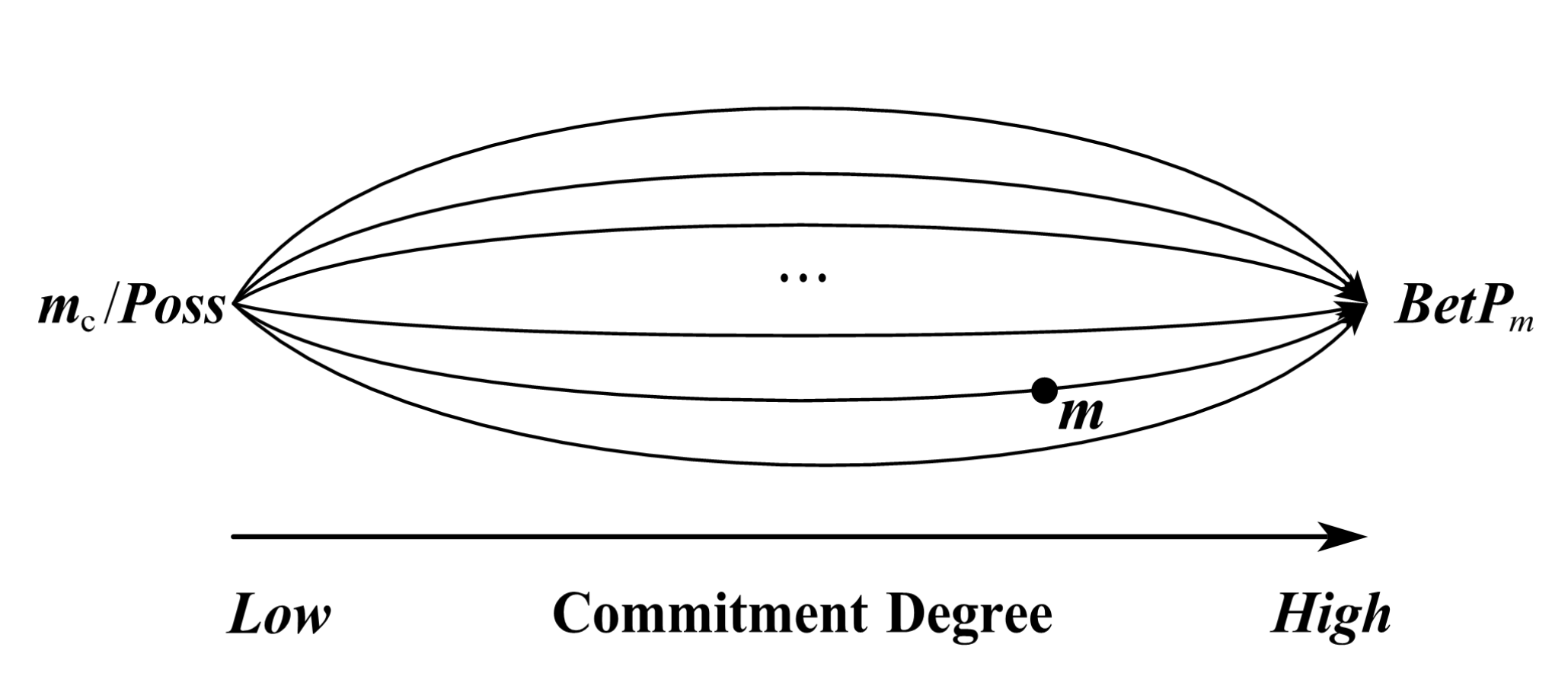}
	\caption{A general BPA can be interpreted as a transient state from possibility to probability}
	\label{aaai_fig}
\end{figure}

\subsection{Canonical decomposition}

Since isopignistic ratio can be used to reconstruct a BPA. Example \ref{e_ad1} compares the reconstructing abilities of these canonical decompositions. When the function values of multi-element subsets vary within the corresponding value ranges, only the reconstruction of the isopignistic ratio ensures that the beliefs of mass function are within the range $[0,1]$.

\begin{example}\label{e_ad1}
	Please refer to Section 3.4 in supplementary material.
\end{example}

In addition to mathematically proving the feasibility of reconstructing a BPA, it is equally important to discuss the significance of the propensity component and commitment component. In the realm of the propensity component, since it can be viewed as a possibility distribution, it reflects the degree of non-negation for elements being the true value of the uncertain variable. When it is unnormalized, i.e., $\max_{\omega\in\Omega} Poss(\omega) < 1$, it also quantifies the importance of the body of evidence. In the realm of the commitment component, when the $Iso^{\tau}\in(0,1]$, it quantifies the negation degree of the elements contained in subsets. For a subset $F_i$ and $Iso^{\tau}(F_i)=1$, it means that the beliefs of $F_i$ is uniformly assigned to $F_j\in\{F_j|F_j = F_i\setminus \{\omega\}, \omega\in F_i\}$. Since the beliefs are transferred from subsets with larger cardinalities to those with smaller cardinalities, this process results in a BPA with a higher degree of commitment. When $Iso^{\tau} \in [-1,0)$, it indicates that beliefs are being transferred to subsets with larger cardinalities. 


Therefore, the reconstruction of a BPA can be achieved using the ratio and possibility distribution, which is the most significant advantage of isopignistic canonical decomposition compared to other methods.

\subsection{HCTBM}

In HCTBM \cite{smets2000theorie}, the propensity component, contour functions, and commonality functions are equivalent for the consonant mass function, and the minimum t-norm operator from possibility theory is used to combine the bodies of evidence. This paper extends the above idea to the general BPAs via isopignistic canonical decomposition. 

\subsubsection{Uncertainty measures}
Since the isopignistic function represents the specific beliefs in the transformation process, it is utilized to develop the novel uncertainty measures from the HCTBM perspective. Specificity, as the most significant uncertainty metric in possibility theory, has been extended in belief function framework \cite{yager2008entropy}. Given a BPA $m$ under the frame $\Omega$, its specificity measure is defined as
\begin{equation}
	\begin{aligned}
		S(m)=\sum_{F_i\subseteq \Omega}\frac{m(F_i)}{|F_i|}.
	\end{aligned}
\end{equation}
When the $m$ is a consonant mass function, and its corresponding possibility distribution is $\pi$, it will equal the specificity measure of the possibility distribution \cite{yager1981measurement}
\begin{equation}\label{yspe}
	\begin{aligned}
		S(\pi)=\int_{0}^{\alpha_{\max}}\frac{1}{|\pi_{\alpha}|}d\alpha,
	\end{aligned}
\end{equation}
where $\alpha$s are the possibilities of elements and $|\pi_{\alpha}|$ is the cardinality of the crisp set $\pi_\alpha=\{\omega_i|\pi(\omega_i)\geq \alpha\}$.

In possibility theory, specificity is utilized to measure the degree of an information granule describing a variable $X$ points to only one element. When the specificity reaches its maximum value of $1$, it indicates that the possibility of an element $\omega_i$ is $1$, while the possibilities of the other elements are $0$, and the value of $X$ is determined as $\omega_i$. In DS theory, $S(m)$ cannot play the same role as the possibility theory anymore. For the Bayesian mass function, its beliefs are focused on the singletons, whose specificity measure reaches the maximum value of $1$. The $X$ degrade to a random variable, and its value still can not be determined. To address this issue, the specificity measure is divided into two sub-metrics based on the propensity and commitment components.

\begin{definition}[specificity measures based on the isopignistic function]\label{smd}
	Given a normalized BPA $m$ under the frame $\Omega$, and its isopignistic function is $Iso^{\zeta}$. Capture the singletons of $Iso^{\zeta}$ as a possibility distribution $Poss_{m}$, and the \textbf{propensity specificity} measure is defined as
	\begin{equation}\label{spe}
		\begin{aligned}
			S_{\rm{p}}(m)=S(Poss_m),
		\end{aligned}
	\end{equation}
	where $S(Poss_m)$ is implemented via Eq.(\ref{yspe}). The \textbf{commitment specificity} measure is defined as
	\begin{equation}\label{sce}
		\begin{aligned}
			S_{\rm{c}}(m)=\frac{\sum_{F_i\subseteq \Omega; |F_i|\geq2}Iso^{\zeta}(F_i)}{\sum_{F_i\subseteq \Omega; |F_i|\geq2}(|F_i|-1)\times m_{\rm{c}}(F_i)},
		\end{aligned}
	\end{equation}
	where $m_{\rm{c}}$ is the consonant function generated from the $Poss_m$.
\end{definition}

\begin{proposition}[Range of propensity specificity]
	Given a BPA $m$ under an $n$-element frame, its range of propensity specificity is $[\frac{1}{n},1]$. When subsets with equal cardinalities hold the same beliefs, the propensity specificity reaches the minimum value $\frac{1}{n}$. When $m$ satisfies $m(\{\omega\})=1$, the propensity specificity reaches the maximum value $1$.
\end{proposition}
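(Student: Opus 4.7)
The plan is to reduce the propensity specificity to the remarkably simple closed form $S_{\rm p}(m) = \max_{\omega \in \Omega} BetP_m(\omega)$, from which both bounds and the two extremal cases fall out immediately.

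First I would relabel the frame so that $p_j := BetP_m(\omega_j)$ is non-decreasing in $j$, and rewrite the definition $Poss(\omega_j) = \sum_i \min(BetP_m(\omega_i), BetP_m(\omega_j))$ as $Poss(\omega_j) = \sum_{i \leq j} p_i + (n-j)\,p_j$ by splitting the minimum at $i=j$. This makes it immediate that $Poss$ is non-decreasing in $j$ and that $Poss(\omega_n) = \sum_i p_i = 1$, so $Poss_m$ is a normalized possibility distribution whose $\alpha$-cuts are nested from the top: on the interval $[Poss(\omega_{j-1}), Poss(\omega_j))$ one has $|\pi_\alpha| = n - j + 1$, with the convention $Poss(\omega_0) = 0$.

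Next I would compute $S(Poss_m)$ from Eq.~(\ref{yspe}) by integrating piecewise over these intervals, obtaining
\begin{equation*}
S(Poss_m) \;=\; \sum_{j=1}^{n} \frac{Poss(\omega_j) - Poss(\omega_{j-1})}{n - j + 1}.
\end{equation*}
A short computation from the closed form above yields the telescoping identity $Poss(\omega_j) - Poss(\omega_{j-1}) = (n - j + 1)(p_j - p_{j-1})$ (with $p_0 := 0$), which exactly cancels the denominator and collapses the sum to $\sum_{j=1}^n (p_j - p_{j-1}) = p_n = \max_\omega BetP_m(\omega)$.

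Finally, since $BetP_m$ is a probability distribution on an $n$-element frame, $\max_\omega BetP_m(\omega) \in [1/n, 1]$ by an elementary pigeonhole argument, with the lower bound attained iff $BetP_m$ is uniform and the upper bound iff $BetP_m$ is a point mass. Both extremal conditions in the statement produce exactly these cases: if $m(F_i) = m(F_j)$ whenever $|F_i| = |F_j|$, symmetry together with Eq.~(\ref{ppte}) forces $BetP_m \equiv 1/n$, hence $S_{\rm p}(m) = 1/n$; and if $m(\{\omega\}) = 1$ for some $\omega$ then $BetP_m(\omega) = 1$, giving $S_{\rm p}(m) = 1$. The main obstacle I anticipate is spotting the telescoping identity in the second step; once that algebraic collapse is in hand, the remaining arguments are routine.
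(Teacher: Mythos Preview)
Your argument is correct: the telescoping identity $Poss(\omega_j)-Poss(\omega_{j-1})=(n-j+1)(p_j-p_{j-1})$ does collapse Yager's integral to the closed form $S_{\rm p}(m)=\max_{\omega}BetP_m(\omega)$, and from there both bounds and both extremal cases are immediate. The paper relegates its own proof to Section~1.9 of the supplementary material, which is not contained in the provided source, so a direct line-by-line comparison is not possible here; that said, your reduction to $\max_\omega BetP_m(\omega)$ is the natural route and your verification that the two stated conditions on $m$ force $BetP_m$ to be uniform (resp.\ a point mass) matches exactly what the proposition asserts.
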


\begin{proof}
	Please refer to Section 1.9 in supplementary material.
\end{proof}

\begin{proposition}[Range of commitment specificity]\label{p10}
	Given a BPA $m$ under an $n$-element frame, its range of propensity specificity is $[0,1]$. When $m$ is consonant, the commitment specificity attains the minimum value $0$. When $m$ is Bayesian, the commitment specificity reaches the maximum value $1$. Notably, in the scenario where $m$ represents a deterministic event, commitment specificity becomes meaningless.
\end{proposition}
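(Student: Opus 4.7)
The plan is to reformulate $S_{\rm c}(m)$ in terms of a cardinality-weighted belief moment and then invoke elementary comparisons together with the dominance of the Dubois--Prade consonant approximation. Define $\Delta(m):=\sum_{F\subseteq\Omega}(|F|-1)\,m(F)$; the denominator of $S_{\rm c}(m)$ is then exactly $\Delta(m_{\rm c})$. The first step would be to establish the flow identity $\sum_{|F|\geq 2} Iso^{\zeta}(F) = \Delta(m_{\rm c})-\Delta(m)$. This follows from belief conservation in the modified BEN: every unit of forward transfer along an edge $k\to k-1$ contributes $+1$ to $\zeta$ at its source and decreases $\Delta$ by $1$, while every unit of backward transfer contributes $-1$ to $\zeta$ at its destination and increases $\Delta$ by $1$. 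Summing net flows over all edges, and using that $m_{\rm c}$ and $m$ share the same $BetP$ so only inter-level flows can redistribute mass, yields
$$S_{\rm c}(m) \;=\; 1-\frac{\Delta(m)}{\Delta(m_{\rm c})}.$$

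Given this formula, the two bounds and their extremal cases follow cleanly. Since $|F|-1\geq 0$ for nonempty $F$, $\Delta(m)\geq 0$ with equality iff every focal set is a singleton, i.e.\ $m$ is Bayesian; this yields $S_{\rm c}(m)\leq 1$ with equality precisely in the Bayesian case. For the lower bound, swap summation order to rewrite $\Delta(m)=\sum_{\omega\in\Omega}Pl_m(\{\omega\})-1$. Then $\Delta(m)\leq \Delta(m_{\rm c})$ reduces to the pointwise dominance $Pl_m(\{\omega\})\leq Pl_{m_{\rm c}}(\{\omega\})=Poss_m(\omega)$ for every $m\in\mathfrak{Iso}_{BetP_m}$; pointwise equality for all $\omega$ forces $m=m_{\rm c}$, so $S_{\rm c}(m)=0$ exactly when $m$ is consonant. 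The deterministic case $m(\{\omega\})=1$ is handled separately: such $m$ is simultaneously Bayesian and consonant, so the Dubois--Prade construction yields $Poss_m(\omega)=1$, $Poss_m(\omega')=0$ for $\omega'\neq\omega$, and $m_{\rm c}=m$ has no multi-element focal sets. Hence $\Delta(m_{\rm c})=0$, and the numerator $\Delta(m_{\rm c})-\Delta(m)$ is $0$ as well, so $S_{\rm c}$ collapses to the indeterminate $0/0$ and is genuinely meaningless, as claimed.

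The hard step will be the pointwise plausibility dominance $Pl_m(\{\omega\})\leq Poss_m(\omega)$ within a fixed isopignistic class. Although classical, it must be argued here via the explicit formula $Poss_m(\omega_j)=\sum_i\min(BetP_m(\omega_i),BetP_m(\omega_j))$. A clean route is a rearrangement argument along the $BetP$-sorted chain $\omega_{(1)},\ldots,\omega_{(n)}$: starting from an arbitrary isopignistic $m$, iteratively redistribute mass from non-nested focal sets into the nested Dubois--Prade chain in a pignistic-preserving way, and check that every elementary local move weakly increases each singleton plausibility, strictly so unless the mass was already nested. The terminal configuration is $m_{\rm c}$, simultaneously proving the inequality and characterizing equality. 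Once this lemma is secured, the remainder of the argument is routine bookkeeping with the flow identity established above.
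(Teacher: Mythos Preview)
The paper defers its own proof to Section~1.10 of the supplementary material, which is not reproduced here, so a line-by-line comparison is not possible. Your overall architecture is correct and is very likely what the authors have in mind: the flow identity
\[
\sum_{|F|\ge 2}\zeta(F)\;=\;\Delta(m_{\rm c})-\Delta(m),\qquad \Delta(m):=\sum_{F}(|F|-1)\,m(F),
\]
follows immediately from the fact that each elementary transfer in the modified BEN moves mass between adjacent cardinality levels and hence changes $\Delta$ by exactly the signed amount moved, independently of the order of execution. The rewriting $S_{\rm c}(m)=1-\Delta(m)/\Delta(m_{\rm c})$ then makes the upper bound, the Bayesian equality case, and the deterministic $0/0$ degeneracy all transparent.

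One concrete improvement: your rearrangement plan for the pointwise dominance $Pl_m(\{\omega\})\le Poss_m(\omega)$ is more delicate than necessary, because a generic pignistic-preserving local move need not increase \emph{every} singleton plausibility simultaneously. A much shorter direct argument works. With $p_i:=BetP_m(\omega_i)$, set
\[
a_{ij}\;:=\;\sum_{F\ni\omega_i,\,F\ni\omega_j}\frac{m(F)}{|F|}\quad(i\neq j),\qquad a_{jj}:=p_j.
\]
Then $Pl_m(\{\omega_j\})=\sum_i a_{ij}$, while $a_{ij}\le p_i$ and $a_{ij}\le p_j$ since each sum ranges over a subcollection of the focal sets containing $\omega_i$ (resp.\ $\omega_j$). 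Hence $a_{ij}\le\min(p_i,p_j)$ and
\[
Pl_m(\{\omega_j\})\;\le\;\sum_i\min(p_i,p_j)\;=\;Poss_m(\omega_j),
\]
which gives $\Delta(m)\le\Delta(m_{\rm c})$ and thus $S_{\rm c}(m)\ge 0$. The same computation shows that equality for all $j$ forces $a_{ij}=\min(p_i,p_j)$ for all $i,j$; reading this off along the $BetP$-sorted order shows every focal set containing $\omega_{(j)}$ must also contain $\omega_{(1)},\dots,\omega_{(j-1)}$, so the focal sets are nested and $m=m_{\rm c}$. This cleanly secures the consonant equality case as well (and is in fact stronger than the proposition states, which only claims the one-directional implications).
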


\begin{proof}
	Please refer to Section 1.10 in supplementary material.
\end{proof}

The introduction of propensity specificity and commitment specificity aims to extend the original goal of specificity. Within the framework of DS theory, find a measure for representing the degree of an information granule points to a specific element. Example \ref{e5} analysis their advantages via a numerical example.

\begin{example}\label{e5}
	Please refer to Section 3.5 in supplementary material.
\end{example}

\subsubsection{Information fusion}

Utilizing the reversible construction of isopignistic ratio, we extend the hyper-cautious combination rules to general BPAs.
\begin{definition}[Combination rules in hyper-cautious TBM]
	Given two BPAs $m_1$ and $m_2$ under the frame $\Omega$, the hyper-cautious combination rules of them are defined as
	\begin{equation}\label{hctbmcr}
		\begin{aligned}
			&Iso^{\tau}_{m_1}=IsoD[\tau](m_1),~Iso^{\tau}_{m_2}=IsoD[\tau](m_2)\\
			& Iso^{\tau}_{m_1\circledtiny{$\rm{H}$}m_2}(F_i)=\begin{cases}
				Iso^{\tau}_{m_1}(F_i) {\rm{H}} Iso^{\tau}_{m_2}(F_i) & |F_i|=1\\
				\frac{Iso^{\tau}_{m_1}(F_i)+Iso^{\tau}_{m_2}(F_i)}{2} & |F_i|\geq2,\\
				1-K & F_i=\emptyset\\
			\end{cases}\\
			& m_1\circledsmall{$\rm{H}$}m_2=IsoD^{-1}[\tau] (Iso^{\tau}_{m_1\circledtiny{$\rm{H}$}m_2} ),
		\end{aligned}
	\end{equation}
	where $$K=1-\max_{\omega_i\in \Omega} (Iso^{\tau}_{m_1}(\{\omega_i\})\circledsmall{$\rm{H}$} Iso^{\tau}_{m_2}(\{\omega_i\})),$$ and $\rm{H}=\{\top_{\boldsymbol{m}},\top_{\boldsymbol{p}},\bot_{\boldsymbol{m}},\bot_{\boldsymbol{p}}\}$, where $\top_{\boldsymbol{m}}(a,b)=\min(a,b)$, $\top_{\boldsymbol{p}}(a,b)=a\cdot b$, $\bot_{\boldsymbol{m}}(a,b)=\max(a,b)$, $ \bot_{\boldsymbol{p}}(a,b)=a+b-a\cdot b$.
\end{definition}

This paper focuses on highlighting the feasibility of utilizing an isopignistic function to handle uncertainty in hyper-cautious TBM. Therefore only discuss minimum and product t-norms in the conjunctive case and maximum and probabilistic t-conorms in the disjunctive case, and other operators containing parameters and their effects in specific applications will be analyzed in further research. In terms of reliability and dependence in the source state, the proposed combination rules based on the above $4$ operators under the hyper-cautious TBM perspective are equivalent to CauCR, CCR, DCR, and BCR (Eqs. (\ref{ccrdcr}) and (\ref{caucrbcr})), respectively. Example \ref{ecr1} utilizes a toy numerical experiment to show the difference of combination rules between hyper-cautious TBM and TBM.

\begin{example}\label{ecr1}
	Please refer to Section 3.6 in supplementary material.
\end{example}

To further show how they differ from the TBM combination rules, with respect to their validity, we will discuss their properties based on the requirements in \cite{abellan2021combination}.
\begin{proposition}[commutativity]\label{p11}
	Exchanging the two BPAs will not affect the outcome of the fusion $m_1\circledsmall{$\rm{H}$}m_2$=$m_2\circledsmall{$\rm{H}$}m_1$.
\end{proposition}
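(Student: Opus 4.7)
The plan is to reduce the commutativity of $\circledsmall{$\rm{H}$}$ to the commutativity of its pointwise building blocks, exploiting the fact that the outer wrapper $IsoD^{-1}[\tau]$ is a deterministic function and therefore preserves equalities of its inputs. Concretely, I would unfold the definition in Eq. (\ref{hctbmcr}) for both $m_1\circledsmall{$\rm{H}$}m_2$ and $m_2\circledsmall{$\rm{H}$}m_1$, and show that the two combined isopignistic ratios $Iso^{\tau}_{m_1\circledtiny{$\rm{H}$}m_2}$ and $Iso^{\tau}_{m_2\circledtiny{$\rm{H}$}m_1}$ coincide on every subset $F_i\subseteq\Omega$. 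Once this pointwise identity is established, applying $IsoD^{-1}[\tau]$ to both sides yields the desired equality of BPAs.

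The verification splits naturally into three cases according to the piecewise definition. For $|F_i|=1$, the combined value is $Iso^{\tau}_{m_1}(F_i)\,{\rm H}\,Iso^{\tau}_{m_2}(F_i)$, and commutativity follows directly from the symmetry of each admissible operator in $\rm H=\{\top_{\boldsymbol{m}},\top_{\boldsymbol{p}},\bot_{\boldsymbol{m}},\bot_{\boldsymbol{p}}\}$, i.e.\ $\min(a,b)=\min(b,a)$, $a\cdot b = b\cdot a$, $\max(a,b)=\max(b,a)$, and $a+b-a\cdot b=b+a-b\cdot a$. For $|F_i|\geq 2$, the combined value is the arithmetic mean $\tfrac{1}{2}\bigl(Iso^{\tau}_{m_1}(F_i)+Iso^{\tau}_{m_2}(F_i)\bigr)$, which is manifestly symmetric in its two arguments. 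For $F_i=\emptyset$, the combined value is $1-K$ with $K=1-\max_{\omega_i\in\Omega}\bigl(Iso^{\tau}_{m_1}(\{\omega_i\})\,\circledsmall{$\rm{H}$}\,Iso^{\tau}_{m_2}(\{\omega_i\})\bigr)$; commutativity of $\rm H$ inside the maximum, established in the previous case, propagates to $K$, so the $\emptyset$-value is also unchanged upon swapping $m_1$ and $m_2$.

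Having shown $Iso^{\tau}_{m_1\circledtiny{$\rm{H}$}m_2}(F_i)=Iso^{\tau}_{m_2\circledtiny{$\rm{H}$}m_1}(F_i)$ for every $F_i\subseteq\Omega$, the conclusion $m_1\circledsmall{$\rm{H}$}m_2=m_2\circledsmall{$\rm{H}$}m_1$ is immediate from the well-definedness of the inverse isopignistic canonical decomposition (Proposition \ref{p8} and Remark \ref{r1}), which guarantees that the mapping $Iso^{\tau}\mapsto IsoD^{-1}[\tau](Iso^{\tau})$ returns a single BPA. There is no genuine obstacle here; the only subtlety worth flagging explicitly is to make sure that the normalization term $1-K$ assigned to $\emptyset$ is itself treated as part of the combined isopignistic object so that commutativity need not be re-verified at the reconstruction stage, and that the chosen operators in $\rm H$ are all symmetric, which is exactly the restriction imposed in the definition.
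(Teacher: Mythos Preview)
Your proposal is correct and follows the only natural route: unfold Eq.~(\ref{hctbmcr}), check that each branch of the piecewise definition is symmetric in its two arguments (commutativity of $\min$, $\max$, product, and probabilistic sum on singletons; symmetry of the arithmetic mean on multi-element subsets; and the induced symmetry of $K$ for the empty set), and conclude by applying the deterministic reconstruction $IsoD^{-1}[\tau]$. This is essentially the same argument the paper relies on.
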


\begin{proof}
	Please refer to Section 1.11 in supplementary material.
\end{proof}

\begin{proposition}[quasi-associativity]\label{p12}
	It does not satisfy the associativity, i.e., $m_1\circledsmall{$\rm{H}$}m_2\circledsmall{$\rm{H}$}m_3\neq m_1\circledsmall{$\rm{H}$}(m_2\circledsmall{$\rm{H}$}m_3)$. However, it holds quasi-associativity, i.e., it can provide a $k$ sources information fusion frame.
\end{proposition}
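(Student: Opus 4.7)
The plan is to prove non-associativity by counterexample and then exhibit a symmetric $k$-ary rule that witnesses quasi-associativity.

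For the negative claim, the essential observation is that on every $F_i$ with $|F_i|\geq 2$, the hyper-cautious rule acts as arithmetic averaging $\tfrac{a+b}{2}$, which is not associative: the two groupings of three inputs $a,b,c$ give $\tfrac{a+b+2c}{4}$ and $\tfrac{2a+b+c}{4}$, which differ whenever $a\neq c$. I would therefore pick three BPAs on a small (say $3$-element) frame whose isopignistic ratios take distinct values on a particular multi-element subset $F_i$, compute both $(m_1\circledsmall{$\rm{H}$}m_2)\circledsmall{$\rm{H}$}m_3$ and $m_1\circledsmall{$\rm{H}$}(m_2\circledsmall{$\rm{H}$}m_3)$ via Eq. (\ref{hctbmcr}), and reconstruct each using $IsoCD^{-1}[\tau]$. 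By Proposition \ref{p8} both outputs are valid BPAs, and the averaging disparity on $F_i$ propagates through the reconstruction, furnishing the required counterexample.

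For quasi-associativity, the plan is to write down a symmetric $k$-ary rule whose restriction to $k=2$ reproduces Eq. (\ref{hctbmcr}). Given BPAs $m_1,\ldots,m_k$ with isopignistic ratios $Iso^{\tau}_{m_j}$, set
\begin{equation*}
Iso^{\tau}_{m_1\cdots m_k}(F_i)=\begin{cases}
{\rm H}_{j=1}^{k}\,Iso^{\tau}_{m_j}(F_i) & |F_i|=1,\\[2pt]
\dfrac{1}{k}\sum_{j=1}^{k}Iso^{\tau}_{m_j}(F_i) & |F_i|\geq 2,\\[2pt]
1-\max_{\omega\in\Omega}{\rm H}_{j=1}^{k}\,Iso^{\tau}_{m_j}(\{\omega\}) & F_i=\emptyset,
\end{cases}
\end{equation*}
where ${\rm H}_{j=1}^{k}$ is the unambiguous $k$-ary extension of the chosen t-(co)norm (each of $\min$, $\max$, product, and probabilistic sum being itself associative in binary form). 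The fused BPA is then $IsoCD^{-1}[\tau]$ applied to this mapping. Both the arithmetic mean and the $k$-ary t-(co)norm are permutation invariant, so the rule does not privilege any grouping of its inputs, which is exactly the $k$-source fusion frame required.

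The main obstacle will be certifying that this $k$-ary mapping satisfies the hypotheses of Proposition \ref{p8}: singleton values remain in $[0,1]$ because t-(co)norms map $[0,1]^k$ into $[0,1]$; multi-element values remain in $[-1,1]$ because arithmetic means preserve intervals; and the empty-set entry matches the required normalization $1-\max_{\omega}pc(\{\omega\})$ by construction. Checking that the binary case coincides with Eq. (\ref{hctbmcr}) is a direct substitution, and Proposition \ref{p11} supplies the pairwise commutativity underpinning permutation invariance. With these verifications in place, the rule provides a well-defined, grouping-independent multi-source fusion frame, establishing quasi-associativity despite the failure of full associativity.
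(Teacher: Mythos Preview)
Your plan is correct and is the natural argument: non-associativity stems from the arithmetic mean on the commitment component, while the symmetric $k$-ary extension (associative $k$-fold t-(co)norm on singletons, $k$-term mean on multi-element subsets, empty set fixed by normalization, validity via Proposition~\ref{p8}) furnishes the required multi-source frame. This is essentially the route the paper takes.

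One point to make explicit in your counterexample: the formulas $\tfrac{\tau_1+\tau_2+2\tau_3}{4}$ and $\tfrac{2\tau_1+\tau_2+\tau_3}{4}$ tacitly assume that decomposing the reconstructed intermediate $m_1\circledsmall{$\rm{H}$}m_2$ returns exactly the combined ratio you fed into $IsoCD^{-1}[\tau]$. By Remark~\ref{r1} this round trip is not the identity on subsets $F_i$ with $m_{\rm c}(F_i)=0$. Since the singleton operator is associative, both groupings produce the \emph{same} fused propensity and hence the same $m_{\rm c}$; so choose your three BPAs so that the witness subset $F_i$ has $m_{\rm c}(F_i)>0$ under that fused propensity. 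With that precaution the averaging disparity genuinely survives reconstruction and the two outputs differ as claimed. Also, permutation invariance of the $k$-ary rule follows directly from the symmetry of the mean and of the t-(co)norm, so you need not route it through Proposition~\ref{p11}.
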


\begin{proof}
	Please refer to Section 1.12 in supplementary material.
\end{proof}

\begin{proposition}[idempotency]\label{p13}
	For the minimum rule and maximum rule, they satisfy $m\circledsmall{$\top$}_{\boldsymbol{m}}/\circledsmall{$\bot$}_{\boldsymbol{m}}m=m$.
\end{proposition}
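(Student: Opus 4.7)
The plan is to show that substituting $m_1 = m_2 = m$ into the combination rule of Eq.~(\ref{hctbmcr}) leaves the isopignistic $\tau$-function unchanged on every subset that matters for reconstruction, and then invoke the invertibility of the isopignistic canonical decomposition to conclude $m \circledsmall{$\rm{H}$} m = m$. First I would decompose $m$ once to obtain $Iso^{\tau}_m = IsoD[\tau](m)$ and then evaluate the three branches of the combination rule with both arguments equal to $m$. For every singleton $\{\omega\}$, the operator ${\rm H}$ is applied to two identical inputs; since both $\top_{\boldsymbol{m}}(x,y)=\min(x,y)$ and $\bot_{\boldsymbol{m}}(x,y)=\max(x,y)$ are idempotent ($\min(x,x)=\max(x,x)=x$), the entire propensity component $Poss_m$ is preserved. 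For every multi-element subset $|F_i|\geq 2$, the arithmetic mean $(Iso^{\tau}_m(F_i)+Iso^{\tau}_m(F_i))/2 = Iso^{\tau}_m(F_i)$ is likewise idempotent, so every commitment ratio on a non-singleton subset is carried through unchanged.

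Second, I would deal with the empty-set entry. The normalization constant becomes $K = 1 - \max_{\omega}\bigl(Iso^{\tau}_m(\{\omega\})\,{\rm H}\,Iso^{\tau}_m(\{\omega\})\bigr) = 1 - \max_{\omega} Poss_m(\omega)$, since the singletons are idempotent under ${\rm H}$. Using the identity $\max_{\omega} Poss_m(\omega) = \sum_{\omega_i\in\Omega} BetP_m(\omega_i) = 1 - m(\emptyset)$ (which is immediate from the definition of $Poss$ and from Eq.~(\ref{ppte})), the $1-K$ entry coincides with the value the decomposition $IsoD[\tau]$ would assign to $m$ itself. Hence the isopignistic $\tau$-representation of $m \circledtiny{$\rm{H}$} m$ matches $Iso^{\tau}_m$ in propensity, in every multi-element ratio, and in the empty-set entry.

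Finally, applying $IsoD^{-1}[\tau]$ to this representation returns $m$: by Proposition~\ref{p8} and Remark~\ref{r1}, a BPA is uniquely reconstructed from a propensity together with an isopignistic ratio, so $m \circledsmall{$\rm{H}$} m = IsoD^{-1}[\tau](Iso^{\tau}_m) = m$. The main obstacle I anticipate is the empty-set calibration, because Remark~\ref{r1} warns that many pre-ratios $pc$ can map to the same BPA; one must verify that the $1-K$ prescription produced by self-combination lies in the fixed-point subfamily $pc = IsoCD^{-1}[\tau](IsoCD[\tau](pc))$ that actually qualifies as an isopignistic ratio. Once this sanity check passes, idempotency reduces to the three elementary identities $\min(x,x)=x$, $\max(x,x)=x$, and $(x+x)/2 = x$.
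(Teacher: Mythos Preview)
Your proposal is correct and follows essentially the same route the paper takes: decompose $m$ via $IsoD[\tau]$, observe that each branch of Eq.~(\ref{hctbmcr}) is idempotent when both inputs coincide (\,$\min(x,x)=\max(x,x)=x$ on singletons, $(x+x)/2=x$ on multi-element subsets\,), and then invoke the bijectivity of the isopignistic canonical decomposition to recover $m$. One small caution on the empty-set bookkeeping: as Eq.~(\ref{hctbmcr}) is literally typeset, $1-K=\max_\omega Poss_m(\omega)=1-m(\emptyset)$ rather than $m(\emptyset)$; what actually saves the argument is that Proposition~\ref{p8} only requires $pc(\emptyset)=1-\max_\omega pc(\{\omega\})$, and since the singleton entries are unchanged this constraint forces the reconstructed empty-set mass to be $m(\emptyset)$ regardless, so your conclusion stands.
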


\begin{proof}
	Please refer to Section 1.13 in supplementary material.
\end{proof}

\begin{proposition}[quasi-neural element]\label{p14}
	For the product rule and probabilistic rule, their quasi-neural elements are $m_\Omega$ and $m_\emptyset$, respectively, i.e., for $m\circledsmall{$\top$}_{\boldsymbol{p}}m_\Omega/ \circledsmall{$\bot$}_{\boldsymbol{p}}m_\emptyset=m'$, it satisfies $BetP_{m}=BetP_{m'}$, and $S_\textrm{c}(m')\leq S_\textrm{c}(m)$. Especially, when $m$ is consonant, it satisfies $m\circledsmall{$\top$}_{\boldsymbol{p}}m_\Omega/ \circledsmall{$\bot$}_{\boldsymbol{p}}m_\emptyset=m$.
\end{proposition}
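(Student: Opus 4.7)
The plan is to exploit the isopignistic structure of the two candidates to reduce the combination rule (\ref{hctbmcr}) to an almost trivial operation on the ratio of $m$, from which both $BetP$-preservation and the commitment inequality follow. First I would decompose $m_\Omega$ and $m_\emptyset$ via $IsoCD[\tau]$. For $m_\Omega$ the pignistic distribution is uniform, hence $Poss_{m_\Omega}(\omega)=1$ on every singleton, and the consonant representative generated from $Poss_{m_\Omega}$ is $m_\Omega$ itself, so the transformation $IT^\tau(m_{\rm c},m_\Omega)$ is the identity and $\tau_{m_\Omega}(F_i)=0$ for every $|F_i|>1$. A parallel calculation gives $Poss_{m_\emptyset}(\omega)=0$, $Iso^\tau_{m_\emptyset}(\emptyset)=1$, and $\tau_{m_\emptyset}(F_i)=0$ on every multi-element subset. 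Substituting into Eq.~(\ref{hctbmcr}) with $H=\top_{\boldsymbol p}$ in the product case and $H=\bot_{\boldsymbol p}$ in the probabilistic case, and using $a\cdot 1=a$ together with $a+0-a\cdot 0=a$ on the singleton layer, yields in both cases
\begin{equation*}
Iso^\tau_{m'}(F_i)=\begin{cases}Poss_m(\omega_j), & F_i=\{\omega_j\},\\[2pt] \tfrac{1}{2}\,Iso^\tau_m(F_i), & |F_i|\geq 2,\end{cases}
\end{equation*}
so the singleton layer is untouched and the multi-element ratios are simply halved; since these values stay in $[0,1]$ and $[-1,1]$ respectively, Proposition \ref{p8} guarantees that $m'=IsoCD^{-1}[\tau](Iso^\tau_{m'})$ is a valid BPA.

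Next I would prove $BetP_{m'}=BetP_m$ by inverting the propensity map $BetP\mapsto Poss$. Sorting the elements in decreasing order of $Poss$, elementary manipulation of its definition gives the telescoping identity
\begin{equation*}
Poss(\omega_{(j)})-Poss(\omega_{(j+1)})=j\bigl(BetP(\omega_{(j)})-BetP(\omega_{(j+1)})\bigr),
\end{equation*}
so $BetP$ is uniquely recovered from $Poss$. Since the combined singleton isopignistic values coincide with $Poss_m$, I obtain $Poss_{m'}=Poss_m$ and hence $BetP_{m'}=BetP_m$. This in turn shows that $m$ and $m'$ share the same consonant representative $m_{\rm c}$, so the denominator of Eq.~(\ref{sce}) is identical for them, and the remaining inequality $S_{\rm c}(m')\leq S_{\rm c}(m)$ reduces to the numerator comparison $\sum_{|F_i|\geq 2}Iso^\zeta_{m'}(F_i)\leq \sum_{|F_i|\geq 2}Iso^\zeta_m(F_i)$.

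For that comparison I would read $\zeta$ off of $\tau$ layer-by-layer through the backward BEN used inside $IsoCD^{-1}[\tau]$. Processing nodes in decreasing cardinality starting from $m_{\rm c}$, the transferred belief $\zeta(F_i)$ equals $\tau(F_i)$ times the mass currently sitting on $F_i$, and all quantities entering this recursion are non-negative once $m_{\rm c}$ is fixed. Halving every multi-element $\tau$ therefore leaves at least as much mass upstream at each step, which in turn keeps $\zeta$ at every downstream node no larger than in the original cascade. This cascading monotonicity is the main obstacle, because $\zeta$ is not a linear function of $\tau$; I would handle it by induction on the cardinality layer, showing that both the node mass remaining at a level and the transferred flux leaving it shrink term-by-term when the ratios are scaled down.

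Finally, the consonant case is immediate from the same formula: if $m$ is consonant then $m=m_{\rm c}$, so $\tau_m(F_i)=0$ on every multi-element $F_i$, halving leaves this zero, the combined $Iso^\tau$ coincides with $Iso^\tau_m$ on all layers, and the reconstruction algorithm $IsoCD^{-1}[\tau]$ returns $m$ itself. This gives the genuine neutrality asserted in the last sentence of the proposition, while for general $m$ only quasi-neutrality (identical $BetP$, commitment specificity not larger) holds.
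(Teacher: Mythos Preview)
Your decomposition of $m_\Omega$ and $m_\emptyset$, the propensity--preservation step, the telescoping inversion $Poss\mapsto BetP$, and the consonant special case are all correct and match the natural route the paper takes. The real issue is the commitment--specificity inequality.

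Your argument for $S_{\mathrm c}(m')\le S_{\mathrm c}(m)$ rests on two claims that do not hold in general. First, you write that ``the transferred belief $\zeta(F_i)$ equals $\tau(F_i)$ times the mass currently sitting on $F_i$, and all quantities entering this recursion are non-negative''. But the isopignistic ratio produced by $IsoCD[\tau]$ is allowed to take values in $[-1,1]$ (see Proposition~\ref{p7}, where the possibilistic endpoint is characterised by $Iso^{\tau}(F_i)\le 0$, and Proposition~\ref{p8}); negative ratios correspond to the backward pass of the two-pass BEN, so your forward-only recursion and the non-negativity premise are not justified. Second, once $\tau(F_i)<0$ is possible, ``halving every $\tau$ leaves at least as much mass upstream'' is false on those nodes: halving a negative $\tau$ weakens the backward pull, so the parent $F_i$ ends up with \emph{less} mass after the halving, not more, and the sign of the induced change in $\zeta$ downstream is no longer monotone in the way you need. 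Your inductive sketch therefore breaks precisely where backward transfers occur, and you have not given an argument that the aggregate numerator $\sum_{|F_i|\ge 2}Iso^{\zeta}(F_i)$ still decreases.

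A cleaner route that avoids the cascade entirely is to observe (using the additivity of $\zeta$ in Proposition~\ref{trans_iso}) that the level-by-level net flux gives
\[
\sum_{|F_i|\ge 2}Iso^{\zeta}_m(F_i)\;=\;\sum_{k\ge 2}(k-1)\Bigl(\sum_{|F|=k}m_{\mathrm c}(F)-\sum_{|F|=k}m(F)\Bigr)\;=\;\Bigl(\sum_{\omega}Poss_m(\omega)-1\Bigr)-\Bigl(\sum_{\omega}Pl_m(\{\omega\})-1\Bigr),
\]
so with the common denominator $\sum_{\omega}Poss_m(\omega)-1$ the inequality $S_{\mathrm c}(m')\le S_{\mathrm c}(m)$ reduces to $\sum_{\omega}Pl_{m'}(\{\omega\})\ge \sum_{\omega}Pl_m(\{\omega\})$, i.e.\ that halving all ratios never decreases the total singleton plausibility. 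This is a statement about one scalar functional of $m$ rather than a node-by-node cascade, and it is what you should actually prove (it is where the ``least-committed'' nature of $m_{\mathrm c}$ in the isopignistic domain enters). Your current sketch does not reach this point.
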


\begin{proof}
	Please refer to Section 1.14 in supplementary material.
\end{proof}

\begin{proposition}[informative monotonicity]\label{p15}
	For the combination $m_1$\circledsmall{$\top$}$m_2$=$m$, their propensity components satisfy
	$Iso_{m_j}(\{\omega_i\})\geq Iso_{m}(\{\omega_i\})$, where $j=\{1,2\}$ and $\omega_i\in\Omega$. For the combination $m_1$\circledsmall{$\bot$}$m_2$=$m$, their propensity components satisfy $Iso_{m_j}(\{\omega_i\})\leq Iso_{m}(\{\omega_i\})$, where $j=\{1,2\}$ and $\omega_i\in\Omega$.
\end{proposition}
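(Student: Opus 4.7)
The plan is to reduce the proposition to the ordering properties of the four operators in $\mathrm{H}=\{\top_{\boldsymbol{m}},\top_{\boldsymbol{p}},\bot_{\boldsymbol{m}},\bot_{\boldsymbol{p}}\}$ on $[0,1]$, because by Eq.~(\ref{hctbmcr}) the combined isopignistic ratio at singletons is defined exactly by applying $\mathrm{H}$ pointwise to the singleton values of the two inputs. The propensity component only concerns values at singletons, so essentially all the multi-element machinery of the decomposition does not enter the argument.

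First, I would invoke Eq.~(\ref{isocde}) to identify the propensity of any BPA $m'$ with the singleton part of its isopignistic function or ratio: $Iso^{\tau}_{m'}(\{\omega_i\}) = Poss_{m'}(\omega_i)$. This turns the claim about propensity components into a claim about the singleton part of $Iso^{\tau}$ before and after combination. Next, by the top line of Eq.~(\ref{hctbmcr}), the singleton entries of $Iso^{\tau}_{m_1\circledtiny{$\rm{H}$}m_2}$ are precisely $\mathrm{H}\bigl(Iso^{\tau}_{m_1}(\{\omega_i\}), Iso^{\tau}_{m_2}(\{\omega_i\})\bigr)$. After applying $IsoD^{-1}[\tau]$ to obtain $m = m_1 \circledsmall{$\rm{H}$} m_2$, I would appeal to Remark~\ref{r1} (and the way the reconstruction algorithm preserves the singleton layer, which is the propensity that drives every subsequent transfer) to conclude that $Iso_m(\{\omega_i\})$ equals these combined singleton values, so that $Iso_m(\{\omega_i\}) = \mathrm{H}(Iso_{m_1}(\{\omega_i\}), Iso_{m_2}(\{\omega_i\}))$.

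From here the result is purely arithmetic. For the conjunctive operators, $\top_{\boldsymbol{m}}(a,b) = \min(a,b) \leq a$ and $\top_{\boldsymbol{p}}(a,b) = ab \leq a$ since $b\in[0,1]$, and the symmetric bounds in $b$ hold as well, yielding $Iso_m(\{\omega_i\}) \leq Iso_{m_j}(\{\omega_i\})$ for $j=1,2$. For the disjunctive operators, $\bot_{\boldsymbol{m}}(a,b) = \max(a,b) \geq a$ and $\bot_{\boldsymbol{p}}(a,b) = 1-(1-a)(1-b) \geq a$ since $(1-b)\leq 1$, and likewise in $b$, giving the reverse inequality $Iso_m(\{\omega_i\}) \geq Iso_{m_j}(\{\omega_i\})$.

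The main obstacle is the middle step: justifying that the singleton entries written into $Iso^{\tau}_{m_1\circledtiny{$\rm{H}$}m_2}$ really are the propensity of the reconstructed BPA after the round-trip through $IsoD^{-1}[\tau]$. This is essentially a consistency check on the reconstruction algorithm, namely that reconstruction never alters the singleton layer (the $Poss(\omega_j)$ slots) that was supplied as input and that the normalization mass $1-K$ assigned to $\emptyset$ in Eq.~(\ref{hctbmcr}) is exactly what the algorithm expects; once this is in hand, the entire proposition collapses to the elementary t-norm/t-conorm inequalities above.
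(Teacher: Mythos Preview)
Your proposal is correct and follows essentially the same line as the paper's proof: since Eq.~(\ref{hctbmcr}) combines the singleton layer of $Iso^{\tau}$ pointwise by $\mathrm{H}$, and since $IsoCD^{-1}[\tau]$ builds its consonant starting point directly from these singleton values (so that the propensity of the reconstructed $m$ coincides with the supplied singleton entries, with the mass $1-K$ on $\emptyset$ absorbing any lack of normalization), the statement reduces immediately to the standard inequalities $\top(a,b)\le a,b$ and $\bot(a,b)\ge a,b$ on $[0,1]$. The ``obstacle'' you flag is exactly the one nontrivial step, and your resolution via Remark~\ref{r1} and the invariance of the singleton layer under reconstruction is the right one.
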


\begin{proof}
	Please refer to Section 1.15 in supplementary material.
\end{proof}

Thus, isopignistic canonical decomposition provides a new perspective on the measurement and fusion of evidential information and enriches the information processing framework of HCTBM.

\section{Conclusions}
The main contributions of this paper are as follows: First, the isopignistic transformation is introduced for the first time as a belief revision method that does not alter decision probabilities. Second, an isopignistic canonical decomposition is developed using the proposed transformation, enabling an interpretable reconstruction of BPAs. Finally, the uncertainty measurements and combination rules in HCTBM are extended based on the proposed method.

In future research, we will further analyze the implications of the isopignistic function and its role in other belief revision methods, as well as develop the pathway from data and knowledge to BPA through the isopignistic ratio. More generally, the information processing methods in HCTBM will be explored in greater depth, with the goal of achieving unified uncertainty management for both probabilistic and possibilistic information.

\section*{Acknowledgment}

The work is partially supported by the National Natural Science Foundation of China (Grant No. 62373078).

\bibliography{mybibfile}
\bibliographystyle{plain}

\end{document}